\definecolor{colTtilde}  {RGB}{ 0, 99, 0}
\definecolor{colEstMeas} {RGB}{ 255, 164, 0}
\definecolor{colEstBlock}{RGB}{ 255, 0, 0}
\definecolor{colMeasLC}  {RGB}{ 159, 31, 239}
\definecolor{colRawMeas} {RGB}{ 0, 0, 255}
\acrodef{IMU}{inertial measurement unit}
\acrodef{DVL-INS}{\acs{DVL}-aided \acs{INS}}
\acrodef{INS}{inertial navigation system}
\acrodef{AUV}{autonomous underwater vehicle}
\acrodef{DVL}{Doppler velocity log}
\acrodef{LBL}{long baseline}
\acrodef{USBL}{ultrashort baseline}
\acrodef{GPS}{global positioning system}
\acrodef{LC}{loop-closure}
\acrodef{DCM}{direction cosine matrix}
\acrodef{LTI}{linear time-invariant}
\acrodef{MMSE}{minimum mean squared error}
\acrodef{MAP}{maximum a posteriori}
\acrodef{SDP}{semidefinite program}
\acrodef{CSP}{constraint satisfaction problem}
\acrodef{LMI}{linear matrix inequality}
\acrodef{SLAM}{simultaneous localization and mapping}
\acrodef{BA}{bundle adjustment}
\acrodef{KF}{Kalman filter}
\acrodef{InEKF}{invariant extended Kalman filter}
\acrodef{ANEES}{average normalized estimation error squared}
\acrodef{CSRS-PPP}{Canadian spacial reference system precision point positioning}
\acrodef{GNSS}{global navigation satellite system}
\acrodef{MCT}{Monte-Carlo trial}
\acrodef{AHRS}{attitude heading and reference system}
\acrodef{CRD}{Collaborative Research and Development}
\acrodef{NSERC}{Natural Sciences and Engineering Research Council of Canada}
\acrodef{MEUSMA}{McGill Engineering Undergraduate Student Masters Award}
\renewcommand{\pdf}[1]{\ensuremath{p}\left(#1 \right)}
\newcommand{\update}[1]{\colour{black}{#1}}
\newcommand{\secondupdate}[1]{\colour{black}{#1}}
\newcommand{\linsysA}{\mbf{A}}
\newcommand{\linsysB}{\mbf{B}}
\newcommand{\linsysL}{\mbf{L}}
\newcommand{\linsysC}{\mbf{H}}
\newcommand{\linsysM}{\mbf{M}}
\newcommand{\Xsol}{\mbf{X}^{\star}}
\newcommand{\insvar}[1]{\tilde{#1}}
\NewDocumentCommand{\dispins}{ O{} O{} }{\insvar{\mbf{r}}_{#2}^{\ifthenelse{\equal{#1}{}}{}{\dispspace} #1 \ifthenelse{\equal{#1}{}}{}{\dispspace}}}
\NewDocumentCommand{\disprvins}{ O{} O{} }{\insvar{\mbfrv{r}}_{#2}^{\ifthenelse{\equal{#1}{}}{}{\dispspace} #1 \ifthenelse{\equal{#1}{}}{}{\dispspace}}}
\NewDocumentCommand{\dcmins}{ O{} }{\insvar{ \mbf{C}}_{#1} {\ifthenelse{\equal{#1}{}}{}{\dcmspace}}}
\NewDocumentCommand{\dcminstrans}{ O{} }{\insvar{ \mbf{C}}_{#1}^{\trans} {\ifthenelse{\equal{#1}{}}{}{\dcmspace}}}
\NewDocumentCommand{\dcmrvins}{ O{} }{\insvar{ \mbfrv{C}}_{#1} {\ifthenelse{\equal{#1}{}}{}{\dcmspace}}}
\NewDocumentCommand{\poseins}{ O{} O{} }{\insvar{ \mbf{T}}_{#2}^{\ifthenelse{\equal{#1}{}}{}{\posespace} #1 \ifthenelse{\equal{#1}{}}{}{\posespace}}}
\NewDocumentCommand{\poservins}{ O{} O{} }{\insvar{ \mbfrv{T}}_{#2}^{\ifthenelse{\equal{#1}{}}{}{\posespace} #1 \ifthenelse{\equal{#1}{}}{}{\posespace}}}
\newcommand{\Prins}{ \insvar{ \mbf{P}}^{\,\textrm{r}}}
\newcommand{\Pins}{ \insvar{ \mbf{P}}}
\newcommand{\postvar}[1]{\hat{#1}}
\newcommand{\Ppost}{ \postvar{ \mbf{P}}}
\newcommand{\uest}{\mbfhat{u}}
\newcommand{\yest}{\mbfhat{y}}
\newcommand{\Qest}{\mbfhat{Q}}
\newcommand{\Cest}{\hat{\linsysC}}
\newcommand{\Rest}{\mbfhat{R}}
\newlength{\subfigheight}
\newsavebox{\subfigbox}
\begin{document}
%
%
%
%
%
%
%
%
\def \myJournal {IEEE Journal of Oceanic Engineering}
\def \myDoi {10.1109/JOE.2023.3286854}
\def \myPaperSiteName {}
\def \myPaperSiteLink {}
\def \myYear {2023}
\def \myPaperCitation{A. Al-Baali, T. Hitchcox, and J. R. Forbes, ``Combining DVL-INS and Laser-Based Loop Closures in a Batch Estimation Framework for Underwater Positioning,'' \textit{IEEE Journal of Oceanic Engineering}, 2023.}


\begin{figure*}[t]

\thispagestyle{empty}
\begin{center}
\begin{minipage}{6in}
\centering
This paper has been accepted for publication in \emph{\myJournal}. 
\vspace{1em}

This is the author's version of an article that has, or will be, published in this journal or conference. Changes were, or will be, made to this version by the publisher prior to publication.
\vspace{2em}

\begin{tabular}{rl}
DOI: & \myDoi\\
\end{tabular}

\vspace{2em}
Please cite this paper as:

\myPaperCitation

\vspace{15cm}
\copyright \myYear \hspace{4pt}IEEE. Personal use of this material is permitted. Permission from IEEE must be obtained for all other uses, in any current or future media, including reprinting/republishing this material for advertising or promotional purposes, creating new collective works, for resale or redistribution to servers or lists, or reuse of any copyrighted component of this work in other works.

\end{minipage}
\end{center}
\end{figure*}
\newpage
\clearpage
\pagenumbering{arabic} 

\fontdimen16\textfont2=\fontdimen17\textfont2
\fontdimen13\textfont2=5pt

\title{Combining DVL-INS and Laser-based Loop Closures in a Batch Estimation Framework for Underwater Positioning}

%
%
%

\author{Amro~Al-Baali,
  \and
  Thomas~Hitchcox,~\IEEEmembership{Graduate~Student~Member,~IEEE,}
  \and
  and James~Richard~Forbes,~\IEEEmembership{Member,~IEEE}
  \thanks{Manuscript received 30 August 2021; revised 5 July 2022; accepted 12 June, 2023. This work was supported by the Natural Sciences and Engineering Research Council of Canada (NSERC) and Voyis Imaging Inc. through the Collaborative Research and Development (CRD) program. The work of Amro Al-Baali was supported by the McGill Engineering Undergraduate Student Masters Award (MEUSMA) program. The work of Thomas Hitchcox was supported by the McGill Engineering Doctoral Award (MEDA) program.  This paper was recommended for publication by Associate Editor E. Brekke upon evaluation of the reviewers' comments. \textit{(Corresponding author: Amro Al-Baali.)}
  
  The authors are with the Department of Mechanical Engineering, McGill University, Montreal, QC H3A~0C3, Canada
    (e-mail:~\href{mailto:amro.al-baali@mail.mcgill.ca}{amro.al-baali@mail.mcgill.ca};
    \href{mailto:thomas.hitchcoxli@mail.mcgill.ca}{thomas.hitchcox@mail.mcgill.ca};
    \href{mailto:james.richard.forbes@mcgill.ca}{james.richard.forbes@mcgill.ca}).
  }}

%
%

\markboth{IEEE Journal of Oceanic Engineering.  Preprint Version.  Accepted June, 2023}{Al-Baali et al.: Combining DVL-INS and Laser-based Loop Closures in a Batch Estimation Framework for Underwater Positioning}%
%



\maketitle

\acused{DVL}
\begin{abstract}
    Correcting gradual position drift is a challenge in long-term subsea navigation.
    Though highly accurate, modern \ac{INS} estimates will drift over time due to the accumulated effects of sensor noise and biases, even with acoustic aiding from a Doppler velocity log (\ac{DVL}).
    The raw sensor measurements and estimation algorithms used by the DVL-aided~INS are often proprietary, which restricts the fusion of additional sensors that could bound navigation drift over time. 
%
    In this letter, the raw sensor measurements and their respective covariances are estimated from the \acs{DVL}-aided~\acs{INS} output using semidefinite programming tools.  The estimated measurements are then augmented with laser-based loop-closure measurements in a batch state estimation framework to correct planar position errors.
    The heading uncertainty from the \ac{DVL}-aided~\ac{INS} is also considered in the estimation of the updated positions.  
    The pipeline is tested in simulation and on experimental field data.  The proposed methodology reduces the long-term navigation drift by more than 30~times compared to the DVL-aided INS estimate. 
\end{abstract}
\acresetall
\acused{DVL-INS}

\begin{IEEEkeywords}
  Underwater navigation, batch estimation, covariance estimation, semidefinite programming, Kalman filtering.
\end{IEEEkeywords}

%
\IEEEpeerreviewmaketitle

\section{Introduction}
%
%
%
%

\acused{AUV}
\IEEEPARstart{A}{utonomous} underwater vehicles (\acp{AUV}) are used for a variety of tasks, including subsea metrology, oceanographic surveys, and bathymetric data collection in marine and riverine environments~\cite{Paull_AUV_2014, Menna_Towards_2019}.
Accurate localization and navigation is essential to ensure the spatial accuracy of the data gathered for these applications.

Electromagnetic signals decay rapidly in water, largely prohibiting the use of \ac{GPS} as a globally correcting sensor for underwater navigation.
\acused{GPS}
Accurate underwater localization solutions rely on \ac{LBL} and \ac{USBL} acoustic sensors, which are expensive and time-consuming to set up~\cite{Paull_AUV_2014}.
An alternative option is to use a high-fidelity
inertial navigation system (INS) with acoustic aiding from a Doppler velocity log (DVL), referred to collectively as a {DVL-INS} system.
\acused{INS}
\acused{DVL}

DVL-INS systems provide accurate attitude and depth estimates.
For example, the Sonardyne SPRINT-Nav~500 provides a heading accuracy on the order of \ang{0.04} and a depth accuracy on the order of \SI{0.01}{\percent} full scale~\cite{Sonardyne_Datasheet_2021}.  High-calibre DVL-INS systems can achieve a drift rate as low as \SI{0.02}{\percent} of distance travelled~\cite{Sonardyne_Datasheet_2021}, however without external correction the ${(x,y)}$ position estimate will continue to drift without bound.  The emphasis of this letter is to improve the long-term accuracy of \ac{AUV} navigation by bounding and reducing displacement errors using laser-based \ac{LC} measurements.

Loop-closure measurements are relative measurements between poses at non-consecutive time steps, and may be used as statistical constraints in the batch estimation problem~\cite{Bailey_Simultaneous_2006,Dellaert_Factor_2017}.
Loop-closure measurements are computed by processing vehicle-to-feature measurements provided by cameras~\cite{Sibley_Vast_2010}, sonar~\cite{Fallon_Relocating_2013, Li_Pose_2018}, or optical scanners.  In this letter, the \ac{LC} measurements are computed by processing laser data collected using a Voyis Imaging Inc. Insight Pro underwater laser scanner.  The laser data is used to detect and match previously observed seabed features, and the resulting submaps are then used to compute the \ac{LC} measurements~\cite{Hitchcox_Comparing_2020, Hitchcox_Point_2020}.

Traditionally, \ac{LC} measurements are fused with raw measurements coming from the \ac{IMU} and \ac{DVL} in a filtering or batch state estimation framework~\cite{Barfoot_State_2017, Farrell_Aided_2008, Dellaert_Factor_2017}.
Batch sensor fusion problems may be represented by a pose graph such as the one in \Cref{fig:ipe/pose_graph_raw_meas_ytildedvl.eps}.
The kinematic and measurement models are used to construct factors in the pose graph and are a necessary part of the inference algorithm~\cite{Dellaert_Factor_2017}. Unfortunately, the raw measurements, sensor models, and navigation algorithms used within DVL-INS systems are proprietary and are inaccessible to the user. Additionally, the cross covariance terms between the correlated DVL-INS estimates at different time steps are missing.
Without this information, the pose graph will look like the one presented in \Cref{fig:ipe/pose_graph_ins_est.eps}, where the unary factors are the DVL-INS pose estimates and there are no factors between nodes except for the \ac{LC} factors.  As such, \ac{LC} corrections will not propagate throughout the pose graph, and their benefit will not be fully exploited.


The approach proposed in this letter is to estimate the interoceptive and exteroceptive measurements used in the DVL-INS and then fuse these estimated measurements with the \ac{LC} measurements in a batch estimation framework.
That is, the objective is to first convert the pose graph in \Cref{fig:ipe/pose_graph_ins_est.eps} to an equivalent pose graph in \Cref{fig:ipe/pose_graph_est_meas.eps}, and then solve the latter pose graph using a standard least-squares optimization.
Since the INS heading and displacement estimates are correlated with one another, the INS heading {uncertainty} (\ie, covariance) is \emph{considered} when estimating the interoceptive measurements.

\begin{figure}[tb]
  \centering
  \begin{subfigure}{\columnwidth}
    \includegraphics[width=\textwidth]{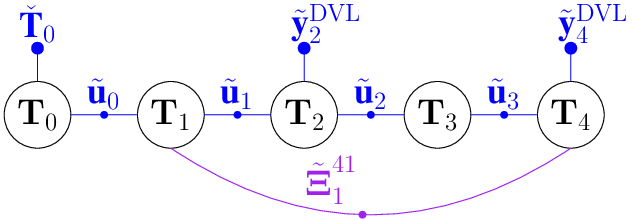}
    \caption{
    Traditional pose graph using raw measurements and \ac{LC} measurements.
    The prior \textcolor{colRawMeas}{$\mbfcheck{T}_{0}$}, the raw interoceptive measurements \textcolor{colRawMeas}{$\mbftilde{u}$}, and the raw \ac{DVL} measurements \textcolor{colRawMeas}{$\mbftilde{y}^{\mathrm{DVL}}$} are part of the DVL-INS and are \emph{not} available to the user.
    Therefore, such a pose graph is not realizable using the output of a commercial DVL-INS.}
    \label{fig:ipe/pose_graph_raw_meas_ytildedvl.eps}
  \end{subfigure}

  \begin{subfigure}{\columnwidth}
    \includegraphics[width=\textwidth]{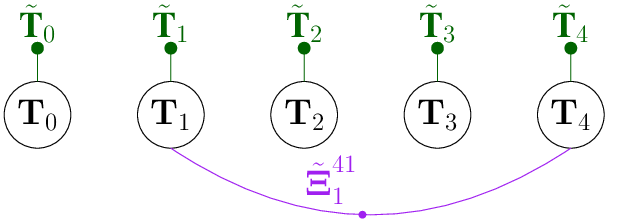}
    \caption{
      Pose graph using the pose estimates \textcolor{colTtilde}{$\mbftilde{T}$} from the DVL-INS and the \ac{LC} measurements \textcolor{colMeasLC}{$\mbs{\Xi}$}.
      The lack of factors between the variable nodes prevents the \ac{LC} corrections from propagating to other poses.
      That is, only poses $\mbf{T}_{1}$ and $\mbf{T}_{4}$ will be updated, while the other poses will still have the DVL-INS values \textcolor{colTtilde}{$\mbftilde{T}$}.
    }
    \label{fig:ipe/pose_graph_ins_est.eps}
  \end{subfigure}

  \begin{subfigure}{\columnwidth}
    \includegraphics[width=\textwidth]{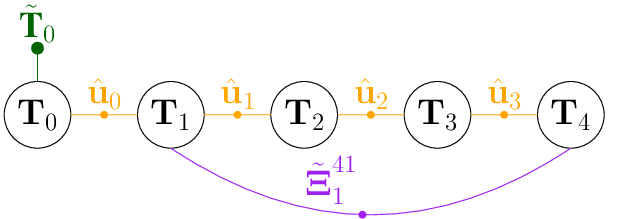}
    \caption{
      Pose graph using the estimated interoceptive measurements \textcolor{colEstMeas}{$\uest$} and the \ac{LC} measurements \textcolor{colMeasLC}{$\mbs{\Xi}$}.
      The problem is made observable by using the DVL-INS state estimate \textcolor{colTtilde}{$\mbftilde{T}_{0}$} as a prior, which is not necessarily the same as the prior \textcolor{colRawMeas}{$\mbfcheck{T}_{0}$} used in the DVL-INS or in the pose graph in \Cref{fig:ipe/pose_graph_raw_meas_ytildedvl.eps}.
      In this pose graph, the corrections from the \ac{LC} measurements are not limited to poses directly connected to the \ac{LC} corrections (\ie, $\mbf{T}_{1}$ and $\mbf{T}_{4}$).
    }
    \label{fig:ipe/pose_graph_est_meas.eps}
  \end{subfigure}
  \caption{
    Explaining the problem statement using pose graphs.
    The graph in \Cref{fig:ipe/pose_graph_raw_meas_ytildedvl.eps} is the traditional pose graph, while the pose graph in \Cref{fig:ipe/pose_graph_ins_est.eps} is the one available from the DVL-INS.  The proposed approach is to convert the given pose graph in \Cref{fig:ipe/pose_graph_ins_est.eps} to an equivalent pose graph \Cref{fig:ipe/pose_graph_est_meas.eps} by estimating the interoceptive measurements using the pose estimates.
  }
  \label{fig:pose graphs}
\end{figure}

The pipeline is presented as a flow chart in \Cref{fig:ipe/flow_chart_full.eps}.
Specifically, the DVL-INS enclosed in the blue dashed box is treated as a ``black-box'' in the sense that the user does not have access to the data, models, or algorithms used within the blue boxes.
The DVL-INS pose estimates $\mbftilde{T}$ are then used along with the laser scanner measurements $\mbftilde{y}^{\mathrm{laser}}$ to produce \ac{LC} measurements $\mbstilde{\Xi}^{\mathrm{loop}}$.
The \ac{LC} detection pipeline is presented in~\cite{Hitchcox_Comparing_2020, Hitchcox_Point_2020, Hitchcox2022a} and is not discussed further in this letter.
The pose estimates $\mbftilde{T}$ and the covariances from the DVL-INS are used to estimate equivalent interoceptive and exteroceptive measurements, denoted by $\uest$ and $\yest$, respectively.
This is done in the \emph{measurement estimation} block, coloured in red in \Cref{fig:ipe/flow_chart_full.eps}.
The estimated interoceptive measurements and the \ac{LC} measurements are then used to construct a factor graph similar to the one presented in \Cref{fig:ipe/pose_graph_est_meas.eps}, which in turn is solved using linear least-squares to produce the posterior displacement estimates $\disphat[zw][a]$.
Finally, the original DVL-INS attitude and depth estimates are combined with the updated displacement estimates to produce the posterior 3D pose estimate $\posehat$.
The work presented in this letter is specifically on the red blocks in \Cref{fig:ipe/flow_chart_full.eps}.

\begin{figure*}[htb]
  \centering
  \includegraphics[width=\textwidth]{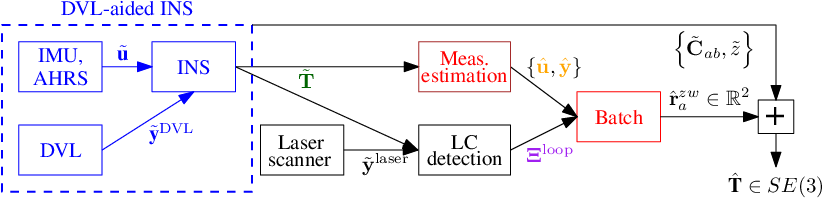}
  \caption{
    A flow chart of the pipeline.
    The \textcolor{blue}{blue} dashed box represents the \acs{DVL}-aided \acs{INS} that is treated as a black-box;
    the information within the \textcolor{blue}{blue} dashed box is not available to the user.  This includes body-centric linear velocity measurements from the DVL, linear acceleration and angular velocity measurements from the IMU, and a heading estimate from the attitude and heading reference system (AHRS).  The blocks in \textcolor{red}{red} are the main contributions of this work.
    \update{The \textcolor{red}{Meas. estimation} block is presented in \Crefrange{sec:Assumptions}{sec:Obtaining Process and Measurement Noise Covariances}, while the \textcolor{red}{Batch} block is presented in \Crefrange{sec:Considering the Heading Uncertainty}{sec:The Batch Estimation Problem}}.
    The \emph{measurement estimation} block estimates the interoceptive and exteroceptive measurements used in the INS from the INS pose estimates $\mbftilde{T}$;
    the \emph{batch} block is a linear least-squares optimization that uses the estimated measurements and the computed \ac{LC} measurement.
    The posterior displacement estimates $\disphat[zw][a]$ are combined with the INS attitude and depth estimates to form posterior 3D pose $\mbfhat{T}\in SE(3)$.
    The \emph{laser scanner} and \emph{\ac{LC} detection} blocks are presented in~\cite{Hitchcox_Comparing_2020, Hitchcox_Point_2020} and are not discussed in this letter.
  }
  \label{fig:ipe/flow_chart_full.eps}
\end{figure*}


The main challenges of the problem at hand are
\begin{enumerate*}[label=(\roman*)]
  \item the raw inertial measurements (\eg, \ac{DVL}, \ac{IMU}, and depth measurements) used in the {DVL-INS} are not available,
  \item the cross-covariance terms between poses at \emph{different} time steps are missing, and
  \item a lack of cross-covariance terms between attitude and displacement state estimates at the \emph{same} time step.
\end{enumerate*}

The novel contributions in this letter are:
\begin{itemize}
  \item estimating sensor raw measurements from post-processed state estimates (\eg, coming from a DVL-INS system) by posing a series of convex optimization problems;

  \item estimating the \update{white} noise process and measurement covariance matrices from incomplete posterior covariances using convex optimization tools while \emph{considering} the heading uncertainty.

\end{itemize}

\update{Note that the approach described in this letter is distinct from \cite{Hitchcox2022a}, in which a white-noise-on-acceleration motion prior is used to propagate loop-closure corrections throughout a DVL-INS trajectory estimate.  Specifically, the proposed approach uses semidefinite programming (SDP) techniques to estimate sensor measurements \textit{and covariance matrices} from the state estimate produced by a black-box DVL-INS system.  Additionally, trajectory corrections here are made on $\rnums^2$, whereas in \cite{Hitchcox2022a} trajectory corrections are made on $SE(3)$.}

The remainder of this letter is organized as follows.
Preliminaries are presented in \Cref{sec:Preliminaries}.
The methodology is presented \Cref{sec:Methodology}, which discusses the assumptions made, the formulation of the convex optimization problem to estimate the covariances, and the heading consider framework. The results of using the pipeline in simulation and on experimental data are presented in \Cref{sec:Simulations and Experiments}.  The paper concludes in \Cref{sec:Conclusion} with a summary and opportunities for future work.

\section{Preliminaries}
\label{sec:Preliminaries}

\subsection{Displacement and Attitude Notation}
\label{sec:Displacement and Attitude Notation}

A planar reference frame $\rframe[a]$ is composed of two orthonormal physical basis vectors.
The planar position of physical point $\update{z}$ relative to physical point $\update{w}$, resolved in reference frame $\rframe[\update{a}]$, is denoted as ${\update{\disp[zw][a]} \in \rnums^2}$.
The orientation of $\rframe[a]$ relative to $\rframe[b]$ is denoted here by a \ac{DCM} $\dcm[ab]$, where
${\dcm \in SO(2) = \left\{ \dcm \in \rnums^{2\times 2} \, | \, \dcm \dcmtrans = \eye, \det \dcm = +1 \right\}}$
\cite{Barfoot_State_2017, Sola_Micro_2020}.

In this letter, $\rframe[a]$ is used to describe the local tangent frame~\cite{Farrell_Aided_2008}, while $\rframe[b]$ is a reference frame that is fixed to and rotates with the vehicle.
Point $w$ is fixed in the world, while point $z$ is affixed to the vehicle.
The notation ${(\cdot)}_k$ is used to distinguish quantities at time $t_k$, such as $\mbf{C}_{ab_k}$ and $\mbf{r}^{z_{k}w}_a$.
\update{The reference frames are visualized in \Cref{fig:ipe/reference_frames.eps}.}

\begin{figure}[b]
  \centering
  \includegraphics[width=\columnwidth]{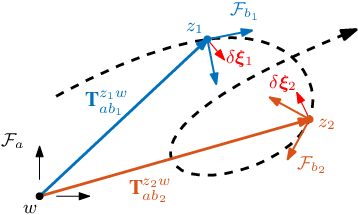}
  \caption{\update{Visualizing the physical points and reference frames involved in the subsea navigation problem.  The planar AUV pose at two instances in time, $t_1$ and $t_2$, is fully described by the transformations ${\mbf{T}^{z_1w}_{ab_1}, \mbf{T}^{z_2w}_{ab_2} \in SE(2)}$, respectively.  Local pose perturbations $\delta \mbs{\xi}_1$ and $\delta \mbs{\xi}_2$ are shown in red.}}
  \label{fig:ipe/reference_frames.eps}
\end{figure}

\subsection[Matrix Lie Group SE(2)]{Matrix Lie Group $SE(2)$}
\label{sec:liegroups}
The planar heading and position of a vehicle, collectively referred to as the vehicle `pose,' may be succinctly written as an element of matrix Lie group $SE(2)$~\cite{Barfoot_State_2017},
\begin{equation}
  \pose[z_{k}w][ab_{k}] =
  \bbm
  \dcm[ab_{k}]
  & \disp[z_{k}w][a]
  \\ \mbf{0}
  & 1
  \ebm
  \in SE(2),
\end{equation}
where
${SE(2) = \left\{ \pose \in \rnums^{3\times 3} \ | \ \dcm \in SO(2), \disp \in \rnums^2 \right\}}$.
Perturbations on $SE(2)$ are modelled in the Lie algebra of $SE(2)$, denoted $\mf{se}(2)$, which is defined as the tangent space at the group identity, $\mf{se}(2) \triangleq T_\eye SE(2)$~\cite{Barfoot_State_2017, Sola_Micro_2020}.
Here, perturbations take the form
\begin{equation}
  \pose = \posebar \exp(-\delta \mbs{\xi}^\wedge),
\end{equation}
where $\posebar$ is a nominal pose, $\exp(\cdot)$ is the matrix exponential, ${\delta \mbs{\xi} \in \rnums^3}$, and the operator ${(\cdot)^\wedge: \rnums^3 \to \mf{se}(2)}$ is an isomorphism between $\rnums^{3}$ and the Lie algebra given by~\cite{Sola_Micro_2020, Barfoot_State_2017}
\begin{equation}
  \delta \mbs{\xi}^\wedge
  =
  \bbm
  \delta \xi^\theta \\ \delta \xi^\textrm{r}_1 \\ \delta \xi^\textrm{r}_2
  \ebm^\wedge
  =
  \bbm
  0           & - \delta \xi^\theta    & \delta \xi^\textrm{r}_1    \\
  \delta \xi^\theta  & 0               & \delta \xi^\textrm{r}_2    \\
  0           & 0               & 0
  \ebm.
\end{equation}
%

\subsection{Random Variables}
\label{sec:randomvariables}

The notation $(\rv{\cdot})$ denotes a random variable.
Normally distributed variables are described by $\mbfrv{x} \sim \mc{N}\left(\mbf{x}, \mbf{P} \right)$, with mean $\mbf{x}$ and covariance $\mbf{P} = \cov{\rv{\mbfdel{x}}} = \mathbb{E} [ \rv{\mbfdel{x}} \, \rv{ \mbfdel{x}}^\trans ]$, where $\rv{\mbfdel{x}} = \mbfrv{x} - \mbf{x}$.
Covariance on $SE(2)$ poses is represented as $\mathbb{E} [ \rv{\mbsdel{\xi}} \, \rv{\mbsdel{\xi}}^\trans ]$.
Finally, the notation
$\hat{(\cdot)}$ is used to denote posterior estimates and $\insvar{(\cdot)}$ to denote input quantities such as data.




\subsection{Loop-Closure Measurements}
\label{sec:Loop Closure Measurements}
Loop-closure measurements are relative measurements between two poses computed by matching features from raw vehicle-to-feature measurements such as a laser finder~\cite{Dellaert_Factor_2017}, camera images~\cite{Sibley_Vast_2010}, ultrasound~\cite{Tardos_Robust_2002}, or optical scans~\cite{Hitchcox_Comparing_2020}.

Let $\mbf{T}_{1}, \mbf{T}_{2}\in SE(2)$ be the true poses at which the features are observed during the first and second passes, respectively.
The true \ac{LC} measurement of pose $\mbf{T}_{2}$ relative to pose $\mbf{T}_{1}$, resolved in the first pose frame, is given by
\begin{align}
  \mbs{\Xi}_{\ell}^{21}
                      & =
  \mbf{T}_{1}\inv \mbf{T}_{2}
  \\
                      & =
  \bbm
  \dcmtrans[1]        & -\dcmtrans[1]\disp[][1]
  \\ \mbf{0} & 1
  \ebm
  \bbm
  \dcm[2]             & \disp[][2]                            \\ \mbf{0} & 1
  \ebm
  \\
  \label{eq:LC meas. model expanded}
                      & =
  \bbm
  \dcmtrans[1]\dcm[2] & \dcmtrans[1](\disp[][2] - \disp[][1]) \\ \mbf{0} & 1
  \ebm,
\end{align}
where the subscript $\ell$ denotes the $\ell$-th \ac{LC} measurement.
The \emph{noisy} measurement is given by
\vspace{6pt}
\begin{align}
  \tilde{\mbsrv{\Xi}}_{\ell}^{21}
   & =
  \mbs{\Xi}_{\ell}^{21}\exp(-\rv{\mbsdel{\xi}}{}_\ell\expand),
  \vspace{6pt}
\end{align}
with
$
  {\rv{\mbsdel{\xi}}{}_\ell \sim \mc{N}\left(\mbf{0}, \mbf{R}_{\ell} \right)}
$
denoting the measurement noise.

\subsection{The Kalman Filter as a MAP estimator}
\label{sec:The Kalman Filter as a MAP estimator}

Consider a discrete-time \ac{LTI} system,
\begin{align}
  \label{eq:KF generic linear process model}
  \mbfrv{x}{}_k
   & = \linsysA\mbfrv{x}_{k-1} + \linsysB\mbf{u}_{k-1} + \linsysL\mbfrv{w}_{k-1},
\end{align}
where ${\mbf{x}_{k}\in\rnums^{n}}$ is the state, ${\linsysA\in\rnums^{n\times n}}$ is the transition matrix, ${\mbf{u}_{k-1} \in \rnums^m}$ is an interoceptive measurement, and ${\mbfrv{w}_{k-1}\sim\mc{N}\left(\mbf{0}, \mbf{Q}_{k-1} \right)}$ is the process \update{white} noise.
Furthermore, let the measurement model be
\begin{align}
  \label{eq:KF generic linear measurement model}
  \mbfrv{y}{}_{k} & = \linsysC_{k}\mbfrv{x}_{k} + \mbfrv{n}_{k},
\end{align}
where $\mbf{y}_k \in \rnums^p$ is an exteroceptive measurement, ${\mbf{H}_k \in \rnums^{p \times n}}$ is the measurement matrix, and ${\mbfrv{n}_{k}\sim\mc{N}\left(\mbf{0},\mbf{R}_{k} \right)}$ is the measurement noise.
The \ac{MMSE} estimator of $\mbfrv{x}_{k}$ given past measurements is the Kalman filter, which is also a \ac{MAP} estimator
\cite{ Barfoot_State_2017},
where the state estimate and the associated covariance are given by 
\vspace{6pt}
\begin{align}
  \mbfhat{P}_{k}\inv
   & =
  \left(\linsysA\mbfhat{P}_{k-1}\linsysA^{\trans} + \linsysL\mbf{Q}_{k-1}\linsysL^{\trans} \right)\inv + \linsysC_{k}^{\trans} \mbf{R}_{k}\inv\linsysC_{k},
  \label{eq:KF covariance eqn}
  \\
  \mbfhat{P}_{k}\inv\mbfhat{x}_{k}
   & =
  \left(\linsysA\mbfhat{P}_{k-1}\linsysA^{\trans} + \linsysL\mbf{Q}_{k-1}\linsysL^{\trans} \right)\inv\left(\linsysA\mbfhat{x}_{k-1} + \linsysB\mbftilde{u}_{k-1} \right)
  \nonumber                                                                                                    \\
   & \mathrel{\phantom{=}} \negmedspace {} + \linsysC_{k}^{\trans} \mbf{R}_{k}\inv\linsysC_{k}\mbftilde{y}_{k}
  \label{eq:KF state eqn}.
  \vspace{6pt}
\end{align}
\Cref{eq:KF covariance eqn,eq:KF state eqn} are known as the \update{information form} of the Kalman filter \update{\cite{Barfoot_State_2017}}.
A brief derivation of \eqref{eq:KF covariance eqn} and \eqref{eq:KF state eqn} is provided in Appendix~\ref{apx:Deriving the Kalman Filter Equations}.

\subsection{Batch Pose Estimation}
\label{sec:Batch Estimation}

Given exteroceptive measurements ${\mbftilde{y}_\ell, \ell = 1, \ldots, L}$, interoceptive measurements ${\mbftilde{u}_k, k=0, \ldots, K-1}$, and a prior estimate on the first pose, ${\mbfcheck{T}_0 = \mbfbar{T}_0 \exp(-\rv{\delta \mbscheck{\xi}}\vphantom{\mbf{\xi}}^\wedge_0)}$, ${\mbfcheck{P}_0 = \mathbb{E} [\rv{\delta \mbscheck{\xi}}^{}_0 \, \rv{\delta \mbscheck{\xi}}_0\vphantom{\mbf{\xi}}^\trans]}$, the \ac{MAP} solution to the batch pose estimation problem is given by~\cite{Barfoot_State_2017}
\vspace{6pt}
\begin{equation}
  \mbfhat{T} = \argmax_{\mbf{T}} p\left(
  \mbf{T} \, \big| \, \mbftilde{y}_{1:L}, \mbftilde{u}_{0:K-1}, \mbfcheck{T}_0
  \right).
  \label{eqn:batchjoint}
  \vspace{6pt}
\end{equation}
Under the Markov assumption,~\eqref{eqn:batchjoint} may be factored as
\begin{equation}
  \mbfhat{T} = \argmax_{\mbf{T}}
  \prod^L_{\ell=1} p \big(\mbftilde{y}_\ell \hspace{0.05em} \big| \hspace{0.05em} \mbf{T}_\ell \big) \prod^{K-1}_{k=0} p \big(\mbf{T}_{k+1} \hspace{0.05em} \big| \hspace{0.05em} \mbf{T}_k, \mbftilde{u}_k \big) p \big(\mbf{T}_0 \hspace{0.05em} \big| \hspace{0.05em} \mbfcheck{T}_0 \big).
  \label{eqn:batchfactored}
\end{equation}
Taking the negative log likelihood of~\eqref{eqn:batchfactored} produces a nonlinear least-squares problem,
\vspace{6pt}
\begin{equation}
  \mbfhat{T} = \argmin_{\mbf{T}} J(\mbf{T}),
  \label{eq:batchobj}
\end{equation}
\vfill\eject

where the objective function is
\vspace{6pt}
\begin{align*}
  J(\mbf{T}) = & \ \frac{1}{2} \sum^L_{\ell=1} \norm{ \mbf{e}_\ell (\mbftilde{y}_\ell, \mbf{g}_\ell(\mbf{T}_\ell))}^2_{\mbf{R}^{-1}_\ell} + \frac{1}{2} \norm{\mbf{e}_0(\mbfcheck{T}_0, \mbf{T}_0)}^2_{\mbfcheck{P}^{-1}_0} \\
               & \ + \frac{1}{2} \sum^{K-1}_{k=0} \norm{\mbf{e}_k(\mbf{f}_k(\mbf{T}_k, \mbftilde{u}_k), \mbf{T}_{k+1})}^2_{\mbf{Q}^{-1}_k},
  \numberthis
  \label{eq:batchsum}
  \vspace{6pt}
\end{align*}
where, in the general case, $\mbf{e}_\ell(\cdot)$, $\mbf{e}_k(\cdot)$, and $\mbf{e}_0(\cdot)$ denote the nonlinear measurement, process, and prior errors, respectively, and $\mbf{f}_k(\cdot)$ and $\mbf{g}_\ell(\cdot)$ represent the nonlinear process and measurement models, respectively.  The the notation ${\norm{\mbf{e}}^2_{\mbs{\Sigma}\inv} = \mbf{e}^\trans \mbs{\Sigma}\inv \mbf{e}}$ denotes the squared Mahalanobis distance.  Note that, for loop closure measurements, the measurement function involves more than one pose, ${\mbf{g}_\ell = \mbf{g}_\ell(\mbf{T}_1, \mbf{T}_2)}$. \Cref{eq:batchobj} is solved by iteratively relinearizing~\eqref{eq:batchsum} about the current state estimate, and minimizing the errors using, for example, Gauss-Newton or Levenberg–Marquardt~\cite{Barfoot_State_2017}.

\subsection{Semidefinite Programming}
\label{sec:Semidefinite Programmming}
\acused{SDP}
\update{
Semidefinite programming (\ac{SDP}) is a subfield of convex optimization, which has applications in control theory, covariance estimation, and more \cite{Caverly_LMI_2019, Boyd_Linear_1994, Boyd_Convex_2004}.
\Acp{SDP} gained popularity due to their expressiveness and strong theoretical and computational properties \cite{majumdarSurveyRecentScalability2019a}.
The theory and notation of \ac{SDP} is presented in this section and then used in \Cref{sec:Convexifying the Optimization Problem} to compute positive definite covariance matrices.
}

An \ac{SDP} problem has the form
\cite{Boyd_Convex_2004}
\begin{subequations}
  \label{eq:SDP standard form}
  \begin{alignat}{2}
     & \optmin_{
      \mbf{x} \in \rnums^{n}
    } \quad
     &                     &
    \mbf{c}^{\trans} \mbf{x}                       \\
     & \mathrm{s.t.} \quad
     &                     &
    \label{eq:LMI definition}
    \sum_{i=1}^{n} x_{i} \mbf{F}_{i} + \mbf{G} > 0 \\
     &
     &                     &
    \mbf{A} \mbf{x} = \mbf{b},
  \end{alignat}
\end{subequations}
where ${\mbf{G}, \mbf{F}_{i} \in \mbb{S}^{m}}$ for $i=1,\ldots,m$, and
\vspace{6pt}
\begin{align}
  \mbb{S}^{m}
  =
  \left\{
  \mbf{X} \in \rnums^{m \times m}
  \mid
  \mbf{X} = \mbf{X}^{\trans}
  \right\}
  \vspace{6pt}
\end{align}
is the set of $m \times m$ symmetric matrices.
The inequality \eqref{eq:LMI definition} is known as a \ac{LMI}, where it implies that the matrix on the left side is positive definite
\cite{Boyd_Convex_2004}.

The \ac{SDP} \eqref{eq:SDP standard form} may be written in matrix form as~\cite{Boyd_Convex_2004}
\vspace{6pt}
\begin{subequations}
  \begin{alignat}{2}
     & \optmin_{
      \mbf{X} \in \mbb{S}^{n}
    } \quad
     &           & \trace{ \left( \mbf{C} \mbf{X} \right)}                                                     \\
     & \st \quad
     &           & \trace{ \left( \mbf{A}_{i} \mbf{X} \right)} = b_{i}, \qquad i=1,\ldots, p                   \\
    \label{eq:LMI definition 2}
     &           &                                                            & \mbf{X} \geq 0,
  \end{alignat}
\end{subequations}
where $\trace(\cdot)$ is the trace operator and \eqref{eq:LMI definition 2} is a positive semidefiniteness constraint on $\mbf{X}$.
The notation ${\mbf{X}>0}$ and ${\mbf{X}\geq 0}$ implies that ${\mbf{X}\in\mbb{S}^{n}}$ is positive definite and positive semidefinite, respectively.
\ac{SDP} problems can be modelled using optimization-modelling toolboxes such as \textsc{yalmip}~\cite{Efberg_YALMIP}, which in turn solve the \ac{SDP} using fast and efficient \ac{SDP} solvers such as those available in \textsc{mosek}~\cite{ApS_MOSEK_2019}.
\vfill

\section{Methodology}
\label{sec:Methodology}

In this letter, it is assumed that the \ac{AUV} is equipped with a DVL-INS.
The DVL-INS produces the pose estimates
\vspace{6pt}
\begin{align}
  \poseins[z_{k}w][ab_{k}]
                  & =
  \bbm
  \dcmins[ab_{k}] & \dispins[z_{k}w][a] \\
  \mbf{0 }        & 1
  \ebm \in SE(3),
  \vspace{6pt}
\end{align}
the marginal covariances on the displacement $\Prins_{k}$, and the marginal covariance on the heading $\tilde{\sigma}_{\theta_k}^2$.
Neither the cross-covariance terms between poses ${\symExpect [\rv{\delta  \mbstilde{\xi}}{\vphantom{\mbs{\xi}}}_{k_1} \, \rv{\delta  \mbstilde{\xi}}{\vphantom{\mbs{\xi}}}^\trans_{k_2}], k_1 \ne k_2}$, nor the cross-covariance between the heading and displacement components $\symExpect [\rv{\delta \tilde{\xi}}{\vphantom{\xi}}^\theta_k \, \big(\rv{\delta \mbstilde{\xi}}{\vphantom{\mbs{\xi}}}^\textrm{r}_k \big)\vphantom{\mbs{\xi}}^\trans ]$ are accessible.

\update{The \ac{AUV} is also equipped with an Insight~Pro underwater laser scanner developed by Voyis Imaging Inc., pictured in \Cref{fig:insightpro}.  This sensor uses laser triangulation to generate high-resolution profiles of the seafloor.  The profiles are registered to the estimated AUV trajectory to generate point-cloud \textit{submaps}, from which loop-closure measurements are computed via a two-part point-cloud alignment algorithm, the details of which may be found in~\cite{Hitchcox_Comparing_2020, Hitchcox_Point_2020, Hitchcox2022a}.}

In the absence of raw interoceptive measurements and cross-covariance between poses at different time steps, there are no probabilistic models to couple the poses together.
That is, there are no binary factors in the pose graph.
An example of such a pose graph is presented in \Cref{fig:ipe/pose_graph_ins_est.eps}.
Without the binary factors, the \ac{LC} corrections will only propagate to the poses they are directly connected to, but not to other poses.
This limits the effectiveness of the batch solution, where it is possible to propagate the \ac{LC} corrections to many poses.

\begin{figure}[b]
  \centering
  \includegraphics[width=\columnwidth]{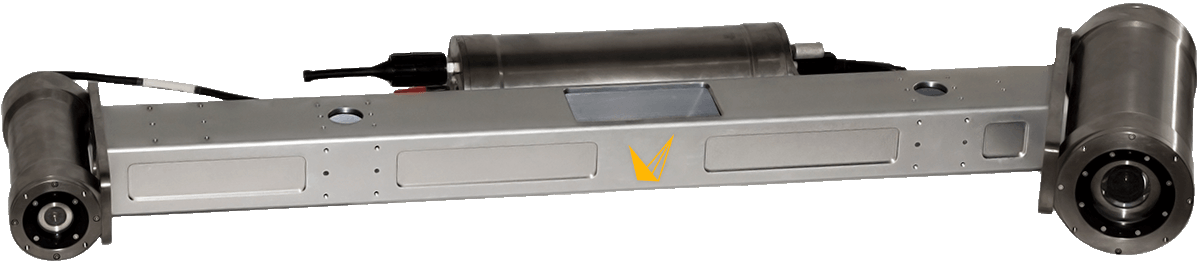}
  \caption{\update{An Insight Pro underwater laser scanner developed by Voyis Imaging Inc., which uses laser triangulation to measure high-resolution profiles of the seafloor.  The sensor baseline is approximately \SI{1}{\meter}.}}
  \label{fig:insightpro}
\end{figure}

The approach proposed in this paper is to \secondupdate{substitute a \textit{simplified process model} for the actual DVL-INS kinematics.  Following this simplification, semidefinite programming techniques are used to estimate the \textit{equivalent interoceptive measurements} that would produce the given DVL-INS trajectory estimate.  These equivalent measurements are then used in conjunction with the simplified process model to propagate newly-received loop-closure measurements within a batch optimization framework.} 
 That is, the pose graph in \Cref{fig:ipe/pose_graph_ins_est.eps} is to be converted to an approximately \emph{equivalent} pose graph as the one in \Cref{fig:ipe/pose_graph_est_meas.eps}, where the binary factors are constructed using the estimated interoceptive measurements.
The equivalent pose graph is then solved using linear least-squares.

\secondupdate{The use of loop-closure measurements to improve relative position estimates is not particularly novel within a conventional state estimation framework, in which raw measurements \textit{and their noise statistics} are known.  The contribution of this work is a method for incorporating loop-closure measurements into an existing trajectory estimate \textit{without access} to either raw sensor measurements or their underlying noise and bias characteristics.  What distinguishes the current approach from naive or existing approaches is \textit{the estimation of a covariance for each of the equivalent estimated interoceptive measurements}.  A covariance estimate ensures each interoceptive measurement is \textit{appropriately weighted} in the resulting linear least-squares problem in which loop-closure measurements are introduced.  This is the first approach to incorporating loop-closure measurements into a ``black-box'' DVL-INS trajectory estimate which considers an appropriate weighting for the estimated interoceptive measurements.}

The DVL-INS, which in this case is treated as a Kalman filter, is not invertible.
That is, there are infinitely many sets of measurements that, if passed through the Kalman filter, would result in the same set of state estimates.
Therefore, a set of assumptions is needed to formulate and solve an optimization problem for the measurements.

\update{
It should be noted that even though the computed quantities are referred to as \emph{retrieved} or \emph{estimated} measurements for succinctness, they are not true estimates of the underlying measurements.
That is, the estimated measurements obtained here may be far, in the Mahalanobis distance sense, from the raw sensor measurements generated by the DVL-INS.
These retrieved measurements can be thought of as \emph{some quantities}, that if used as measurements to estimate the vehicle state using a Kalman filter, would yield the same state estimate produced by the DVL-INS.
}

\subsection{Assumptions}
\label{sec:Assumptions}
The DVL-INS system will be treated as a Kalman filter,
where the discrete-time process model
\begin{align}
  \label{eq:assumed process model}
  \disprv[z_{k}w][a]
   & =
  \disprv[z_{k-1}w][a] + T_{k-1} \dcmins[ab_{k-1}]\uest_{k-1} + \mbf{L}\mbfhatrv{w}_{k-1}
\end{align}
is linear in the displacements,
${\dcmins[ab_{k-1}] \! \in SO(2)}$ is the \emph{known} heading estimate from the {DVL-INS},
$T_{k-1}$ is the sampling period,
$\uest_{k-1}$ are the interoceptive measurements to be estimated,
${\mbfhatrv{w}_{k-1}\sim\mc{N} (\mbf{0}, \Qest_{k-1} )}$ is the process noise,
and $\Qest_{k-1}$ is the process noise covariance to be estimated.
Furthermore, \update{to ensure the existence of a solution,} it is assumed that the filter is equipped with an exteroceptive sensor where the measurement model is given by
\begin{subequations}
\begin{align}
  \label{eq:assumed measurement model}
  \mbfhatrv{y}_{k}
   & =
  \linsysC_{k} \disprv[z_{k}w][a] + \mbfrv{n}_{k}
  \\
   & =
  \mbfhat{H}_k \dispins[z_{k}w][a] + \mbfhatrv{n}_{k},
\end{align}
\end{subequations}
where $\mbfhat{H}_{k}$ is the measurement matrix to be estimated, ${\mbfhatrv{n}_{k}\sim \mc{N}(\mbf{0}, \Rest_{k})}$ is the measurement noise, and $\Rest_{k}$ is the measurement noise covariance to be estimated.  The estimated exteroceptive measurements,
and their respective covariances, are needed to ensure the covariance estimation problem is well-posed.


\subsection{Estimating the Measurement Covariances}
\label{sec:Estimating the Measurement Covariances}
In contrast to the Kalman filter equations \eqref{eq:KF covariance eqn} and \eqref{eq:KF state eqn} where the objective is to compute the covariance on the state estimates given the noise covariances, the objective of the present approach is to estimate the noise covariances using the covariance on the state estimates.  That is, given covariances $\Pins_{k}$ and $\Pins_{k-1}$ from the {DVL-INS}, the objective is to find measurement matrices $\linsysC_{k}$, and covariances
${\mbf{Q}_{k-1}>0}$ and ${\mbf{R}_{k}>0},$ such that
\begin{align}
  \label{eq:information constraint eqn C'RinvC}
  \Pins_{k}\inv
   & =
  \left(\linsysA\Pins_{k-1}\linsysA^{\trans} + \linsysL\mbf{Q}_{k-1}\linsysL^{\trans} \right)\inv + \linsysC_{k}^{\trans} \mbf{R}_{k}\inv\linsysC_{k}
\end{align}
from \eqref{eq:KF covariance eqn} holds, where $\Pins_{i}$ are the covariances from the {DVL-INS}, and the constant process model matrices ${\linsysA=\eye}$ and $\linsysL$ are obtained from \eqref{eq:assumed process model}.

The number of design variables is reduced by defining the variable
\begin{align}
  \label{eq:Omegak = C'Rkm1invC}
  \mbs{\Omega}_{k}
   & \triangleq
  \linsysC_{k}^{\trans} \mbf{R}_{k}\inv \linsysC_{k}
  \geq 0,
\end{align}
where the positive semidefiniteness constraint arises from the fact that $\mbf{R}_{k}$ is positive definite and $\linsysC_{k}$ is generally a wide matrix with an associated null space.  Inserting \eqref{eq:Omegak = C'Rkm1invC} into \eqref{eq:information constraint eqn C'RinvC} results in
\begin{align}
  \label{eq:information constraint eqn Omega}
  \Pins_{k}\inv
   & =
  \left(\linsysA\Pins_{k-1}\linsysA^{\trans} + \linsysL\mbf{Q}_{k-1}\linsysL^{\trans} \right)\inv + \mbs{\Omega}_{k},
\end{align}
where the two design variables are $\mbf{Q}_{k-1} > 0$ and $\mbs{\Omega}_{k}\geq 0$.
The last term in \eqref{eq:information constraint eqn C'RinvC} and \eqref{eq:information constraint eqn Omega} is necessary for the existence of a solution.
To see this, consider the following counterexample where the last term is ignored.
\begin{example}
  Consider the {DVL-INS} covariances ${\Pins_{k}, \Pins_{k-1}\in\mbb{S}^{n}}$, where ${\Pins_{k}<\Pins_{k-1}}$.
  This is possible when step $k$ is a correction step~\cite{Barfoot_State_2017}.
  Furthermore, let $\mbf{A}=\mbf{L}=\eye$.
  Then, the $\mbf{Q}_{k-1}$ that satisfies \eqref{eq:information constraint eqn C'RinvC} is
  \begin{align}
    \mbf{Q}_{k-1}
     & = \Pins_{k} - \Pins_{k-1} < 0,
  \end{align}
  which does not satisfy the positive definiteness constraint on $\mbf{Q}_{k-1}$.
  Therefore, the problem is infeasible.
\end{example}
The \ac{CSP} \eqref{eq:information constraint eqn Omega} has a solution but the solution is not unique;
the proof is provided in \Cref{thm:existence and nonuniqueness of CSP}.
Therefore, an optimization problem with a meaningful objective function must be designed.

\subsection{Formulating the Optimization Problem}
\label{sec:Formulating the Optimization Problem}
Given that the \ac{AUV} is not equipped with a globally correcting sensor such as an \ac{LBL} or \ac{USBL} transceiver, the {DVL-INS} displacement estimates will drift over time.
\update{
  Thus, the measurements from the measurement model \eqref{eq:assumed measurement model} should have a minimal effect.
  Ideally, this is achieved by setting the measurement information matrix $\mbf{R}_{k}\inv$ to zero, which results in $\mbs{\Omega}_{k}$ being zero.
  However, as discussed at the end of \Cref{sec:Estimating the Measurement Covariances}, to keep the problem well-posed and ensure the existence of a solution, the variable $\mbs{\Omega}_{k}$ is to be minimized, but could be nonzero.
}
As such, using \eqref{eq:information constraint eqn Omega}, the objective function becomes
\begin{align}
  J(\mbf{Q}_{k-1})
   & =
  \fnorm{ \mbs{\Omega}_{k}}^{2}
  \\
  \label{eq:nonconvex objective function}
   & =
  \fnorm{
    \Pins_{k}\inv -
    \left(\linsysA\Pins_{k-1}\linsysA^{\trans} + \linsysL\mbf{Q}_{k-1}\linsysL^{\trans} \right)\inv
  }^{2},
\end{align}
where $\fnorm{\cdot}$ is the Frobenius norm.
The objective function is not a function of $\mbs{\Omega}_{k}$, which reduces the number of the unknown variables.
It should be noted that if $J(\mbfhat{Q}_{k-1})=0$ is achieved, then this implies that $\mbshat{\Omega}_{k}=\mbf{0}$, which in turn implies that there is no exteroceptive correction in the {DVL-INS} at time $k$.  The optimization problem is then
\vspace{6pt}
\begin{subequations}
  \label{eq:nonconvex optim problem}
  \begin{alignat}{2}
    \label{eq:nonconvex optim problem: obj func}
    \optmin_{
      \mbf{Q}_{k-1} \in \mbb{S}^{n}
    } \quad
     &
     &   &
    J(\mbf{Q}_{k-1})
    \\
    \label{eq:nonconvex optim problem: Omegak geq 0}
    \st \quad
     &
     &   &
    \Pins_{k}\inv -
    \left(\linsysA\Pins_{k-1}\linsysA^{\trans} + \linsysL\mbf{Q}_{k-1}\linsysL^{\trans} \right)\inv \geq 0,
    \\
    \label{eq:nonconvex optim problem: Qkm1 > 0}
     &
     &   &
    \mbf{Q}_{k-1} > 0
  \end{alignat}
\end{subequations}
where \eqref{eq:nonconvex optim problem: obj func} is given in \eqref{eq:nonconvex objective function}, and \eqref{eq:nonconvex optim problem: Omegak geq 0} comes from combining \eqref{eq:information constraint eqn Omega} with the positive semidefiniteness constraint on $\mbs{\Omega}_{k}$.

\subsection{Convexifying the Optimization Problem}
\label{sec:Convexifying the Optimization Problem}
Positive semidefineteness constraints, in the form of \acp{LMI}, can be enforced using \ac{SDP}
\cite{Boyd_Convex_2004}. The optimization problem \eqref{eq:nonconvex optim problem} is not an \ac{SDP} because
\begin{enumerate*}[label=(\roman*)]
  \item
  the objective function \eqref{eq:nonconvex objective function} is not convex, and

  \item
  the matrix inequality \eqref{eq:nonconvex optim problem: Omegak geq 0} is not affine (\ie, not an \ac{LMI})
\end{enumerate*}
in the design variable $\mbf{Q}_{k-1}$.
In order to use convex optimization tools, a substitution of variables is made to convert the optimization problem into a valid \ac{SDP}~\cite{Boyd_Convex_2004}. Define
\vspace{3pt}
\begin{align}
  \label{eq:Xkm1 definition}
  \mbf{X}_{k-1}
   & \triangleq
  \left(
  \linsysA\Pins_{k-1}\linsysA^{\trans}
  + \linsysL\mbf{Q}_{k-1}\linsysL^{\trans}
  \right)\inv.
  \vspace{3pt}
\end{align}
Then the positive definiteness of $\Pins_{k-1}$ and $\mbf{Q}_{k-1}$ and the full row rank of $\mbf{A}$ and $\mbf{L}$ imply
\vspace{3pt}
\begin{align}
  \label{eq:Xkm1 > 0}
  \mbf{X}_{k-1}>0.
  \vspace{3pt}
\end{align}
Inserting \eqref{eq:Xkm1 definition} into the nonconvex objective function \eqref{eq:nonconvex objective function} results in the objective function
\vspace{3pt}
\begin{align}
  \label{eq:convexified objective function}
  J^\prime \left(\mbf{X}_{k-1} \right)
   & =
  \fnorm{
    \Pins_{k}\inv - \mbf{X}_{k-1}
  }^{2},
  \vspace{3pt}
\end{align}
which is convex in $\mbf{X}_{k-1}$.
Similarly, the nonlinear inequality constraint \eqref{eq:nonconvex optim problem: Omegak geq 0} becomes
\vspace{3pt}
\begin{align}
  \label{eq:Pkm1 - Xkm1 >= 0}
  \Pins_{k-1}\inv - \mbf{X}_{k-1}
   & \geq 0,
   \vspace{3pt}
\end{align}
which is an \ac{LMI} in $\mbf{X}_{k-1}$.
To convert the inequality on $\mbf{Q}_{k-1}$
to an inequality on $\mbf{X}_{k-1}$, \eqref{eq:nonconvex optim problem: Qkm1 > 0} is first replaced
with the necessary condition
\begin{align}
  \label{eq:LQLt > 0}
  \linsysL \mbf{Q}_{k-1} \linsysL^{\trans}
   & > 0,
\end{align}
where \eqref{eq:LQLt > 0} implies \eqref{eq:nonconvex optim problem: Qkm1 > 0} for $\mbf{L}$ with full row rank, but not vice-versa, unless $\linsysL$ is a nonsingular matrix. Using \eqref{eq:Xkm1 definition}, the inequality \eqref{eq:LQLt > 0} is replaced with
\vspace{3pt}
\begin{align}
  \label{eq: nonlinear inequality constraint on Xkm1}
  \mbf{X}_{k-1}\inv - \linsysA\Pins_{k-1}\linsysA^{\trans}
   & > 0,
\end{align}
which is a nonlinear inequality constraint on $\mbf{X}_{k-1}$.
The nonlinear inequality constraint \eqref{eq: nonlinear inequality constraint on Xkm1} is converted to an \ac{LMI},
\begin{align}
  \label{eq: convexified LMI on Xkm1}
  \mbf{X}_{k-1} - \left(\linsysA \Pins_{k-1} \linsysA^{\trans} \right)\inv
   & < 0,
\end{align}
using \Cref{lemma:invertible strict LMI}. Combining the objective function \eqref{eq:convexified objective function} with the \acp{LMI} \eqref{eq:Xkm1 > 0}, \eqref{eq:Pkm1 - Xkm1 >= 0}, and \eqref{eq: convexified LMI on Xkm1} results in the \ac{SDP}
\begin{subequations}
  \label{subeq:SDP Xkm1}
  \begin{alignat}{2}
    \label{eq:SDP Xkm1: convex objective function}
    \optmin_{
      \mbf{X}_{k-1}\in\mbb{S}^{n}
    }
    \qquad    &
    \fnorm{
      \Pins_{k}\inv - \mbf{X}_{k-1}
    }^{2},
    \\
    \label{eq:SDP Xkm1: convex constraint on Xkm1}
    \st\qquad &
    \mbf{X}_{k-1} - \left(\linsysA \Pins_{k-1} \linsysA^{\trans} \right)\inv
              &       & < 0,
    \\
    \label{eq:SDP Xkm1: Pk-Xkm1>=0}
              & \quad
    \mbf{X}_{k-1} - \Pins_{k}\inv
              &       & \leq 0,
    \\
    \label{eq:SDP Xkm1: Xkm1>0}
              & \quad
    \mbf{X}_{k-1}
              &       & > 0,
  \end{alignat}
\end{subequations}
which can be solved using convex optimization tools such as \textsc{yalmip}~\cite{Efberg_YALMIP} and \textsc{mosek}~\cite{ApS_MOSEK_2019}.

\subsection{Obtaining Process and Measurement Noise Covariances}
\label{sec:Obtaining Process and Measurement Noise Covariances}
Once the \ac{SDP} \eqref{subeq:SDP Xkm1} is solved, the optimal process noise covariance matrix $\Qest_{k-1}$ and the optimal measurement noise covariance $\Rest_{k}$ are retrieved.
Let  $\Xsol$ be the solution to the \ac{SDP} \eqref{subeq:SDP Xkm1}.
Using \eqref{eq:Xkm1 definition}, the optimal process noise covariance matrix $\Qest_{k-1}$ is computed by solving the \ac{CSP}
\begin{subequations}
  \label{eq:CSP problem LQkm1L'}
  \begin{align}
    \linsysL \mbf{Q}_{k-1} \linsysL^{\trans}
     & = \left(\Xsol_{k-1}\right)\inv - \linsysA \Pins_{k-1}\linsysA^{\trans},
    \\
    \mbf{Q}_{k-1}
     & > 0,
  \end{align}
\end{subequations}
for $\mbf{Q}_{k-1}$, which can be computed analytically if $\linsysL$ is nonsingular.
The estimated measurement noise covariance $\mbf{R}_{k}$ and measurement matrix $\mbf{H}_{k}$ are not needed in the batch estimation pipeline, but the method to retrieve them is provided here for completeness.
Using \eqref{eq:information constraint eqn Omega} and \eqref{eq:Xkm1 definition}, the optimal information matrix $\mbs{\Omega}_{k}$ is
\begin{align}
  \mbs{\Omega}_{k}^{\star}
   & =
  \Pins_{k}\inv - \Xsol_{k-1}.
\end{align}
The optimal \update{measurement} matrix $\Cest_{k}$ and the measurement noise covariance matrix $\Rest_{k}$ are computed using an eigendecomposition.
Specifically, the positive semidefinite information matrix $\mbs{\Omega}^\star_{k}$ is decomposed into
\begin{align}
  \label{eq:CRinvC' dominant eigenvalues}
  \mbs{\Omega}_{k}^{\star}
                            & =
  \Cest_{k}^{\trans} \Rest_{k}\inv \Cest_{k}
  \\
                            & =
  \bbm
  \mbf{V}_{k, 1}            & \mbf{V}_{k, 2}
  \ebm
  \bbm \mbs{\Lambda}_{k, 1} &                \\ & \mbf{0} \ebm
  \bbm
  \mbf{V}_{k, 1}^{\trans}                    \\ \mbf{V}_{k, 2}^{\trans}
  \ebm
  \\
                            & =
  \mbf{V}_{k, 1}
  \mbs{\Lambda}_{k, 1}
  \mbf{V}_{k, 1}^{\trans},
\end{align}
where $\left(\mbs{\Lambda}_{k, 1}, \mbf{V}_{k, 1} \right)$ are the eigenpairs with positive eigenvalues, if they exist.
The optimal measurement and noise covariance matrices are, respectively,
\begin{align}
  \label{eq:Cest from eigenvector}
  \Cest_{k}
  =
  \mbf{V}_{k, 1}^{\trans},
  \qquad
  \Rest_{k}
  =
  \mbs{\Lambda}_{k, 1}\inv.
\end{align}

\subsection{Considering the Heading Uncertainty}
\label{sec:Considering the Heading Uncertainty}
As assumed in \Cref{sec:Assumptions}, the heading estimates from the {DVL-INS} are known to sufficient accuracy.
While highly precise, the heading estimates still retain uncertainty, and it is possible to account for that uncertainty by treating the heading as a noisy parameter.
This is similar to the methodology behind the `consider' Kalman filter, also referred to as a Schmidt-Kalman filter
\cite{Crassidis_Optimal_2012, Simon_Optimal_2006}.  This methodology is referred to herein as the \emph{consider framework}.
The consider framework only affects \Crefrange{sec:Obtaining Process and Measurement Noise Covariances}{sec:The Batch Estimation Problem},
meaning the methods discussed in \Crefrange{sec:Estimating the Measurement Covariances}{sec:Convexifying the Optimization Problem} remain unchanged.

In the consider framework, the heading uncertainty is \emph{considered} when estimating the process noise covariance matrix $\mbf{Q}_{k-1}$.
This is done by treating the heading in the process model \eqref{eq:assumed process model} as a random variable with a known mean and covariance.
The resulting process model is then
\begin{align}
  \label{eq:SE2 process model with random heading}
  \disprvins[z_{k}w][a]
   & =
  \disprvins[z_{k-1}w][a] + T_{k-1} \dcmrvins[ab_{k-1}]\uest_{k-1} + \mbf{L}\mbfrv{w}_{k-1},
\end{align}
where
\begin{align}
  \label{eq:INS random heading}
  \dcmrvins[ab_{k-1}]
   & = \dcmins[ab_{k-1}]\exp \big(-\rv{\delta\theta}_{k-1}\crossop \big),
\end{align}
and ${\rv{\delta\theta}_{k-1}\sim\mc{N} (0, (\tilde{\sigma}^{\theta}_{k-1})^2 )}$ is the heading noise.
The mean heading $\dcmins[ab_{k-1}]$ and the heading standard deviation $\tilde{\sigma}_{k-1}^{\theta}$ are known from the {DVL-INS}.  Inserting \eqref{eq:INS random heading} into the stochastic process model \eqref{eq:SE2 process model with random heading} and perturbing around the mean estimate yields
\begin{IEEEeqnarray}{rCl}
  \disprvins[z_{k}w][a]
  & = &
  \disprvins[z_{k-1}w][a] + T_{k-1} \dcmrvins[ab_{k-1}]\uest_{k-1} + \mbf{L}\mbfrv{w}_{k-1}
  \nonumber
  \\
  & = &
  \disprvins[z_{k-1}w][a] + T_{k-1} \dcmins[ab_{k-1}]\exp\big(-\rv{\delta\tilde{\theta}}_{k-1} \crossop\big)\uest_{k-1} + \mbf{L}\mbfrv{w}_{k-1}
  \nonumber
  \\
  & \approx &
  \disprvins[z_{k-1}w][a] + T_{k-1} \dcmins[ab_{k-1}]\uest_{k-1}
  \nonumber\\
  && \quad \negmedspace {} -  T_{k-1} \dcmins[ab_{k-1}]\rv{\delta\tilde{\theta}}_{k-1}\crossop\uest_{k-1} + \mbf{L}\mbfrv{w}_{k-1}
  \nonumber
  \\
  & = &
  \disprvins[z_{k-1}w][a] + T_{k-1} \dcmins[ab_{k-1}]\uest_{k-1}
  \nonumber\\
  && \quad \negmedspace {} -  T_{k-1} \dcmins[ab_{k-1}] \mbs{\Gamma} \uest_{k-1}\rv{\delta\tilde{\theta}}_{k-1} + \mbf{L}\mbfrv{w}_{k-1}
  \nonumber
  \\
  & = &
  \disprvins[z_{k-1}w][a] + T_{k-1} \dcmins[ab_{k-1}]\uest_{k-1}
  \nonumber\\
  && \quad \negmedspace {}
  +
  \underbrace{
  \bbm
  -T_{k-1} \dcmins[ab_{k-1}] \mbs{\Gamma} \uest_{k-1} &
  \mbf{L}
  \ebm
  }_{\mbf{L}_{k-1}'}
  \underbrace{
    \bbm
    \rv{\delta\tilde{\theta}}_{k-1} \\
    \mbfrv{w}_{k-1}
    \ebm
  }_{\mbfrv{w}^\prime_{k-1}},
\end{IEEEeqnarray}
where ${\theta^\times = \theta \mbs{\Gamma}}$,
\begin{align}
  \mbs{\Gamma}
   & = \bbm 0 & -1 \\ 1 & 0 \ebm,
\end{align}
and ${\exp (-\rv{\delta\tilde{\theta}}_{k-1} \crossop ) \approx \eye - \rv{\delta\tilde{\theta}}_{k-1} \crossop}$.  The process noise $\mbfrv{w}_{k-1}$ has been augmented in order to \textit{consider} the heading noise $\rv{\delta\tilde{\theta}}_{k-1}$.
This forms an updated noise column matrix, ${\mbfrv{w}_{k-1}' \sim \mc{N} ( \mbf{0}, \mbf{Q}_{k-1}' )}$, where
\begin{align}
  \label{eq:consider framework Qtilde}
  \mbf{Q}_{k-1}'
       & =
  \bbm
  (\tilde{\sigma}^{\theta}_{k-1})^2
       & \mbf{q}_{\mbf{w}_{k-1},\theta_{k-1}}^{\trans}
  \\
  \update{\mbf{q}_{\mbf{w}_{k-1},\theta_{k-1}}} & \mbf{Q}_{k-1}
  \ebm,
\end{align}
\update{with ${\mbf{q}_{\mbf{w}_{k-1},\theta_{k-1}} \! \in\mathbb{R}^{2}}$ the cross-covariance between the process noise $\mbfrv{w}_{k-1}$ and the heading noise $\rv{\delta\tilde{\theta}}_{k-1}$.}
%
The \ac{CSP} problem \eqref{eq:CSP problem LQkm1L'} then becomes
\begin{subequations}
  \label{eq:CSP problem LQkm1L' tilde}
  \begin{align}
    \mbf{L}_{k-1}' \mbf{Q}_{k-1}' \mbf{L}_{k-1}'^{\trans}
     & = \left(\Xsol_{k-1}\right)\inv - \linsysA \Pins_{k-1}\linsysA^{\trans},
    \\
    \mbf{Q}_{k-1}'
     & > 0,
    \\
    \label{eq:consider CSP problem: heading uncertainty constraint}
    \left[ \mbf{Q}_{k-1}' \right]_{(1, 1)}
     & = (\tilde{\sigma}^{\theta}_{k-1})^2,
  \end{align}
\end{subequations}
in the new design variable $\mbf{Q}_{k-1}' \in \mbb{S}^{3}$.
\subsection{Estimating Interoceptive Measurements from DVL-INS Output}
\update{
To reiterate the point discussed at \Cref{sec:Methodology}, the quantities referred to as \emph{retrieved} or \emph{estimated} measurements are not true estimates of the underlying data, but instead can be thought of as \emph{some quantities}, that if used as measurements to estimate the vehicle state using a Kalman filter, would yield the same state estimate produced by the DVL-INS.
}

The interoceptive measurements are computed by solving \eqref{eq:assumed process model} for $\mbf{u}_{k-1}$ using the {DVL-INS} estimates.
Specifically,
\begin{align}
  \label{eq:retrieve interoceptive measurements}
  \uest_{k-1}
   & =
  \tfrac{1}{T_{k-1}}\dcminstrans[ab_{k-1}]
  (\dispins[z_{k}w][a] - \dispins[z_{k-1}w][a]).
\end{align}
The interoceptive measurements are smoothed by taking a weighted sum of the previous $N$ interoceptive measurements.
If desired, the exteroceptive measurements are estimated using
\begin{align}
  \label{eq:yest = Cest star xins}
  \yest_{k}
   & = \Cest_{k} \dispins[z_{k}w][a],
\end{align}
where $\Cest_{k}$ is computed from \eqref{eq:Cest from eigenvector}, and $\dispins[z_{k}w][a]$ is the DVL-INS displacement estimate.

\subsection{The Batch Estimation Problem}
\label{sec:The Batch Estimation Problem}
Assuming a known heading estimate $\mbftilde{C}_{ab}$ from the DVL-INS, the process and measurement errors $\mbf{e}_{k}(\cdot)$ and $\mbf{e}_{\ell}(\cdot)$ from \Cref{sec:Batch Estimation} become linear functions of displacement only.  To see this, first note that the process model $\mbf{f}_k$ becomes
\begin{align}
  \mbf{f}_{k}(\disp[z_{k}w][a], \mbfhat{u}_k)
   & = \disp[z_{k}w][a] + T_{k-1}\dcmins[ab_{k}]\uest_{k},
\end{align}
while the measurement model $\mbf{g}_\ell$ becomes
\begin{equation}
  \mbf{g}_\ell \big(\disp[z_{k_2}w][a], \disp[z_{k_{1}}w][a] \big) = \mbftilde{C}_{ab_{k_1}}^\trans \big(\mbf{r}^{z_{k_2}w}_a - \mbf{r}^{z_{k_1}w}_a \big).
\end{equation}
This leads to a process error of
\begin{align}
  \mbf{e}_{k}(\disp[z_{k+1}w][a], \disp[z_{k}w][a], \mbfhat{u}_k)
   & =
  \disp[z_{k+1}w][a] - \mbf{f}_{k}(\disp[z_{k}w][a],\uest_{k}),
  \label{eqn:linearprocesserror}
\end{align}
and a measurement error (resolved in $\rframe[a]$) of
\begin{align*}
  \mbf{e}_\ell \big(\mbf{r}^{z_{k_2}w}_a\hspace{-4pt}, \mbf{r}^{z_{k_1}w}_a\hspace{-4pt}, \mbstilde{\Xi}^{k_2k_1}_\ell \big) = & \ \mbftilde{C}_{ab_{k_1}} \hspace{-3pt} \big(\mbf{g} \big(\disp[z_{k_2}w][a]\hspace{-4pt},\disp[z_{k_{1}}w][a] \hspace{-1pt} \big) - \mbftilde{y}_\ell \big(\mbstilde{\Xi}^{k_2k_1}_\ell \big) \big) \\
  =                                                                                                                            & \ \mbf{r}^{z_{k_2}w}_a - \mbf{r}^{z_{k_1}w}_a - \mbftilde{C}_{ab_{k_1}} \mbftilde{y}_\ell \big(\mbstilde{\Xi}^{k_2k_1}_\ell \big),
  \numberthis
  \label{eqn:linearmeaserror}
\end{align*}
\vspace{-3pt}
where
\begin{equation}
  \label{eq:linear LC meas. model y = C'(r2-r1)}
  \mbftilde{y}_\ell \big(  \mbstilde{\Xi}^{k_2k_1}_\ell \big) =   \begin{bmatrix}
    \eye & \mbf{0} \end{bmatrix}   \mbstilde{\Xi}^{k_2k_1}_\ell \begin{bmatrix}
    \mbf{0} \\ 1
  \end{bmatrix} \in \rnums^2.
\end{equation}
Both the process error~\eqref{eqn:linearprocesserror} and the measurement error \eqref{eqn:linearmeaserror} are linear functions of displacement only.  Denoting ${\mbf{R}^\textrm{r}_\ell = \mathbb{E} [ \rv{\mbsdel{\xi}}{}^\textrm{r}_\ell \, (\rv{\mbsdel{\xi}}{}^\textrm{r}_\ell)^\trans}]$, the covariance on the measurement error is
\vspace{-3pt}
\begin{equation}
  \mbf{R}_{e_\ell}^\textrm{r}
  =
  \cov{\mbf{e}_\ell}
  =
  \mbftilde{C}_{ab_{k_1}}  \mbf{R}^\textrm{r}_\ell  \mbftilde{C}_{ab_{k_1}}^\trans.
\end{equation}
Note that the estimated exteroceptive measurements $\yest_k$ are excluded from the batch solution.  This is because the number of estimated exteroceptive measurements is much larger than the number of \ac{LC} measurements.
Thus, if these estimated exteroceptive measurements are used with the \ac{LC} measurements in the batch estimation, then the estimated exteroceptive measurements would dominate the solution.  Furthermore, given that the estimated exteroceptive measurements are computed from the {DVL-INS} solution, using these exteroceptive measurements would pull the batch posterior closer to the {DVL-INS} solution, which in turn reduces the effectiveness of the \ac{LC} corrections.  This is further explored and justified in \Cref{sec:simdata} using simulated data.

\begin{figure}[b]
  \centering
  \includegraphics[width=\columnwidth]{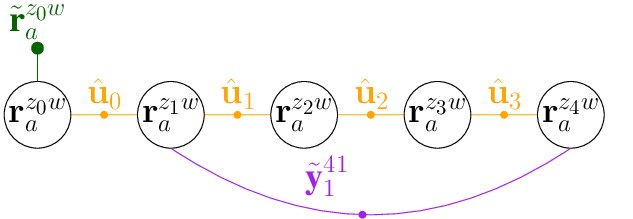}
  \caption{
    Factor graph of the linear least-squares problem \eqref{eq: linear lsq Jr-z} using the estimated interoceptive measurements \textcolor{colEstMeas}{$\uest$}, the {DVL-INS} displacement estimate \textcolor{colTtilde}{$\dispins[z_{0}w][a]$} as a prior, and the \ac{LC} measurements \textcolor{colMeasLC}{$\mbf{y}^{41}_{1}$} given by \eqref{eq:linear LC meas. model y = C'(r2-r1)}.
  }
  \label{fig:ipe/factor_graph_r_nodes.eps}
\end{figure}

\begin{figure*}[t]
  \sbox\subfigbox{%
    \resizebox{\dimexpr0.9\textwidth-1em}{!}{%
      \includegraphics[height=3cm]{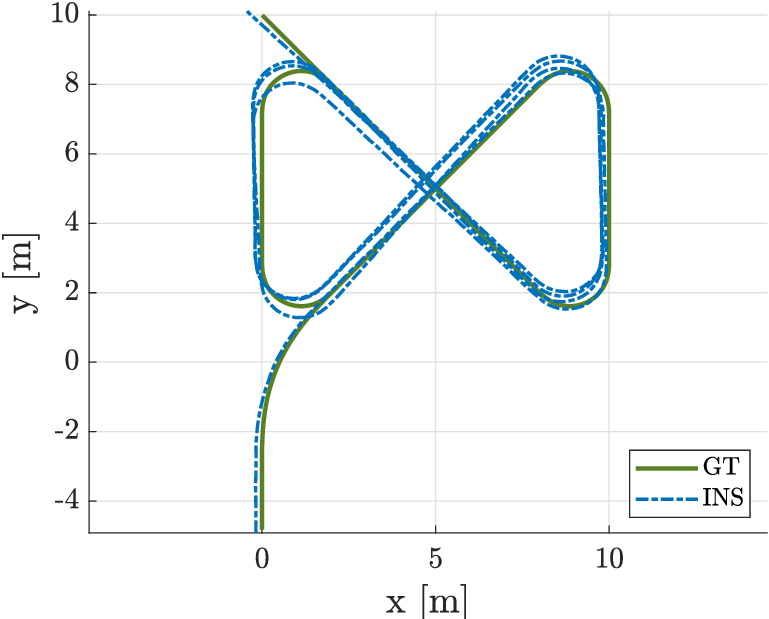}%
      \includegraphics[height=3cm]{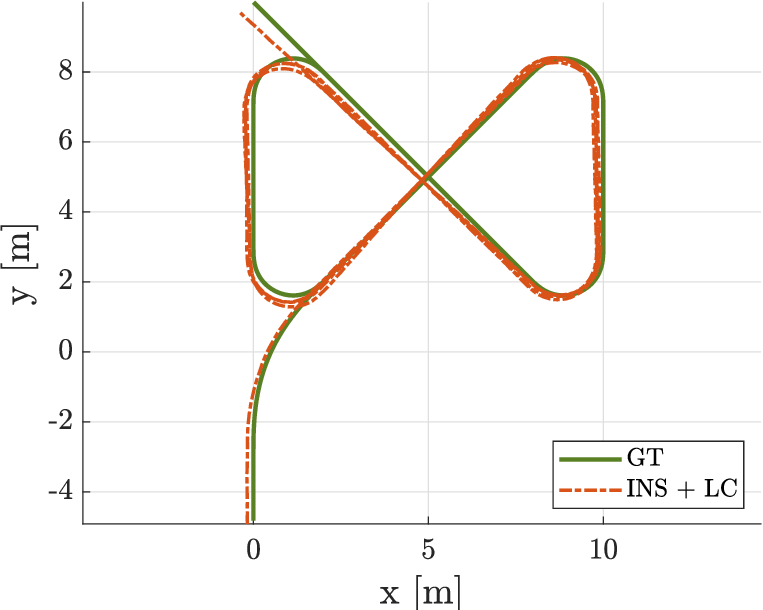}%
    }%
  }
  \setlength{\subfigheight}{\ht\subfigbox}
  \centering
  \subcaptionbox{
    A dead-reckoned estimate simulating \ac{INS} drift
    \label{fig:simulation/trajectories_without_inslc.eps}
  }{%
    \includegraphics[height=\subfigheight]{figs/simulation/trajectories_without_inslc.eps}
  }
  %
  \quad
  \subcaptionbox{
    Posterior estimate found via the methodology in \Cref{sec:Methodology}
    \label{fig:simulation/trajectories_with_inslc.eps}.
  }{%
    \includegraphics[height=\subfigheight]{figs/simulation/trajectories_with_inslc.eps}
  }
  \caption{
    Comparison between the simulated dead-reckoned trajectory (`INS') and the posterior trajectory computed using the methodology from \Cref{sec:Methodology} (`INS~+~LC').
    The vehicle passes over the area ${(x,y)=(5,5)}$ eight times, which results in seven \ac{LC} measurements relative to the first pass.  The `INS~+~LC' trajectory is more self-consistent compared to the drifting `INS' estimate, especially at the central intersection.
  }
  \label{fig:simulation trajectories}
\end{figure*}

The nonlinear batch problem of \Cref{sec:Batch Estimation} becomes
\vspace{-3pt}
\begin{align}
  \label{eq: linear lsq Jr-z}
  \disphat
   & =
  \argmin_{\disp}
  \frac{1}{2}
  \left(\mbf{J} \disp - \mbf{z}\right)
  \mbs{\Sigma}\inv
  \left(\mbf{J} \disp - \mbf{z}\right)^{\trans},
  \vspace{-3pt}
\end{align}
where
\begin{align}
  \disp
   & =  \bbm (\disp[z_{0}w][a])^{\trans}   & \cdots & (\disp[z_{K}w][a])^{\trans} \ebm^{\trans},
  \\
  \mbf{z}
   & = \left[
    \begin{array}{cccc}
      (\dispins[z_{0}w][a])^{\trans} & (T_{0}\dcmins[ab_{0}]\mbfhat{u}_{0})^{\trans} & \cdots & (T_{K-1}\dcmins[ab_{K-1}]\mbfhat{u}_{K-1})^{\trans}
    \end{array}
    \right.
    \nonumber                                                                                    \\
   & \mathrel{\phantom{=}} \negmedspace {}
    \left.
    \begin{array}{ccc}
      (\mbftilde{C}_{ab_{k_1}} \mbftilde{y}_1)^{\trans} & \cdots & (\mbftilde{C}_{ab_{k_1}} \mbftilde{y}_L)^{\trans}
    \end{array}
    \right]^{\trans},
  \\
  \mbf{J}
   & =
  \bbm
  \begin{array}{ccccc}
    \eye  &        &        &        &      \\
    -\eye & \eye   &        &        &      \\
          & -\eye  & \eye   &        &      \\
          &        & \ddots & \ddots &      \\
          &        &        & -\eye  & \eye \\
    \hline
          & -\eye  & \cdots & \eye   &      \\
          & \vdots &        &               \\
          & -\eye  & \cdots &        & \eye \\
  \end{array}
  \ebm,
  \\
  \mbs{\Sigma}
   & =
  \begin{bmatrix}
    \begin{array}{cccc|ccc}
      \mbftilde{P}_{0} &           &        &             &                          &        &                          \\
                       & \Qest_{0} &        &             &                          &        &                          \\
                       &           & \ddots &             &                          &        &                          \\
                       &           &        & \Qest_{K-1} &                          &        &                          \\
      \hline
                       &           &        &             & \mbf{R}^\textrm{r}_{e_1} &        &                          \\
                       &           &        &             &                          & \ddots &                          \\
                       &           &        &             &                          &        & \mbf{R}^\textrm{r}_{e_L}
    \end{array}
  \end{bmatrix}.
\end{align}
This least-squares optimization problem can be represented graphically using a factor graph~\cite{Dellaert_Factor_2017} similar to the one presented in \Cref{fig:ipe/factor_graph_r_nodes.eps}.  The analytic solution to \eqref{eq: linear lsq Jr-z} is~\cite{Barfoot_State_2017}
\begin{align}
  \disphat
   & = \left(\mbf{J}^{\trans}\mbs{\Sigma}\inv \mbf{J} \right)\inv \mbf{J}\mbs{\Sigma}\inv\mbf{z}.
\end{align}
The posterior estimates for the planar displacement ${\disphat[z_{k}w][a]\in\rnums^{2}}$ are augmented with the DVL-INS attitude estimates ${\dcmins[ab_{k}]\in SO(3)}$ and depth estimates $\insvar{z}_{k}$ to form the 3D pose,
\begin{align}
  \label{eq:fusing disp with attitude and depth}
  \posehat[z_{k}w][ab_{k}]
                  & =
  \bbm
  \dcmins[ab_{k}] & \bbm \disphat[z_{k}w][a] \\ \insvar{z}_{k} \ebm
  \\
  \mbf{0}         & 1
  \ebm.
\end{align}
This operation is denoted by the `$+$' box in the flow chart of \Cref{fig:ipe/flow_chart_full.eps}.
\update{
Finally, a pseudo-code summarizing the methodology presented in \Cref{sec:Methodology} is presented in \Cref{alg:INS+LC}.
}

\begin{algorithm}[t]
  \caption{\update{Summary of \Cref{sec:Methodology}.}}
  \label{alg:INS+LC}
  \begin{algorithmic}[1]
    \Statex \textbf{Input}: State estimates $\mbftilde{T}$, \ac{LC} measurements $\mbstilde{\Xi}^{\mathrm{loop}}$.%
    \thline%
    \For{each pose $\mbftilde{T}_{k}$ in $\mbftilde{T}$}
      \State Compute $\mbf{X}_{k-1}$ by solving the \ac{SDP} \eqref{subeq:SDP Xkm1}.
      \State \parbox[t]{\columnwidth-\leftmargin-\labelsep-\labelwidth}{Retrieve measurement covariance $\mbfhat{Q}_{k-1}$ by solving the \ac{CSP} \eqref{eq:CSP problem LQkm1L' tilde}.}\vspace{3pt}
      \State \parbox[t]{\columnwidth-\leftmargin-\labelsep-\labelwidth}{Estimate the interoceptive measurements $\mbfhat{u}_{k-1}$ \eqref{eq:retrieve interoceptive measurements}.}\vspace{3pt}
    \EndFor

    \State Estimate displacements $\disp[zw][a]$ by solving the linear least-squares problem \eqref{eq: linear lsq Jr-z} using the estimated interoceptive measurements $\mbfhat{u}$ and the \ac{LC} measurements $\mbstilde{\Xi}^{\mathrm{loop}}$.
    
    \State Augment the posterior displacement estimates $\disphat[zw][a]$ from the previous step with the attitude and depth estimates using \eqref{eq:fusing disp with attitude and depth}.
  \end{algorithmic}
\end{algorithm}


\section{Simulations and Experiments}
\label{sec:Simulations and Experiments}
The pipeline described in \Cref{sec:Methodology} is tested on simulated and experimental data.
Testing in simulation, where ground-truth information is available, allows for a comparison between batch solutions using either simulated or estimated sensor measurements.

\subsection{Performance Metric}
\label{sec:Performance Metric}
Due to the global unobservability of the \ac{SLAM} problem, the estimated and true trajectories may be misaligned, for example as shown in \Cref{fig:simulation trajectories}.
A location-invariant relative  metric, similar to the one proposed in~\cite{Kuemmerle_Measuring_2009}, is therefore used to assess the performance of the estimator.  The metric outlined in this section is for $SE(2)$ poses, but is also valid for $SE(3)$ poses.

Let $\pose[][k], \posehat[][k] \in SE(2)$ be the true and estimated poses, respectively, of the vehicle at time $t_{k}$.  The pose at time $t_{\ell}$ relative to the pose at time $t_{k}$ is
\begin{align}
  \label{eq:Kummerle error defintition}
  \delta\pose[{}][k\ell]
   & \triangleq
  \mbf{T}_{k}\inv \mbf{T}^{}_\ell,
  \vspace{-3pt}
\end{align}
where $t_{\ell}$ marks the earliest observation time of the first feature.  Furthermore, let
\vspace{-3pt}
\begin{align}
  \mbf{E}_{k}
                       & =
  \delta\mbfhat{T}_{k\ell}\inv \delta\mbf{T}^{}_{k\ell}
  \eqqcolon
  \bbm
  \delta\dcmhat[k\ell] & \delta\disphat[][k] \\
  \mbf{0}              & 1
  \ebm \in SE(2)
\end{align}
be the error between the estimated and true relative poses, where the estimated relative pose $\delta\mbfhat{T}_{k\ell}$ is computed from \eqref{eq:Kummerle error defintition} using estimated poses.
The metric used herein is the norm of the relative displacement error $\norm{\delta\disphat[][k]}$ at each time step $t_{k}$.
The relative displacement errors computed from a dead-reckoned estimate are expected to grow without bound, while an estimate that incorporates loop closures should produce bounded relative errors.



\subsection{Simulation}
\label{sec:simdata}

A simulation is set up to generate state estimates that resemble a drifting {DVL-INS}, as seen in \Cref{fig:simulation/trajectories_without_inslc.eps}.
\update{
Specifically, the true vehicle states, including the linear and angular velocities, are first computed from a given trajectory, and then the velocity measurements are corrupted with white noise to produce the noisy linear and angular interoceptive measurements $\mbf{u}$ and $\omega$, respectively.
These interoceptive measurements are then passed through the $SE(2)$ process model
\begin{align}
\label{eq:se2 process model}
\poseins[][k]
 &=
 \poseins[][k-1]
 \bbm
  T_{k-1}\omega_{k-1}\crossop & T_{k-1}\mbf{u}_{k-1} \\
  \mbf{0} & 1
 \ebm
\end{align}
to produce a set of state estimates $\poseins$, and the covariances are propagated by linearizing the process model \eqref{eq:se2 process model} with respect to the state and measurements.
}
The cross-covariance terms between the estimated heading and displacements are then ignored to mimic the DVL-INS output used in the open water experiments discussed in \Cref{sec:Experimental Data}.

Treating the dead-reckoned estimates as the {DVL-INS} estimates, the planar displacement is updated via the batch method given in \Cref{sec:The Batch Estimation Problem}. The updated state estimates are referred to as \emph{posterior} estimates and are denoted by $\hat{(\cdot)}$.
An example of a posterior trajectory computed using estimated measurements is presented in \Cref{fig:simulation/trajectories_with_inslc.eps}.

The metric discussed in \Cref{sec:Performance Metric} is used to compare the prior estimate (`INS') against the posterior estimate generated using \update{only} the estimated interoceptive measurements (`INS~+~LC')\update{, the posterior estimate generated using both the estimated interoceptive \emph{and exteroceptive} measurements (`INS~+~LC with ext. meas'), and} the posterior estimate generated using corrupted ground-truth measurements (`Odometry~+~LC~(batch)').
In all cases, the same \ac{LC} measurements are used.
A \ac{MCT} experiment is conducted over \update{10}~trials and the relative displacement error, averaged across trials, is shown in \Cref{fig:simulation/with_and_without_exteroceptive_meas/kummerle_energy.eps}.

\begin{figure}[tb]
  \centering
  \includegraphics[width=0.9\columnwidth]{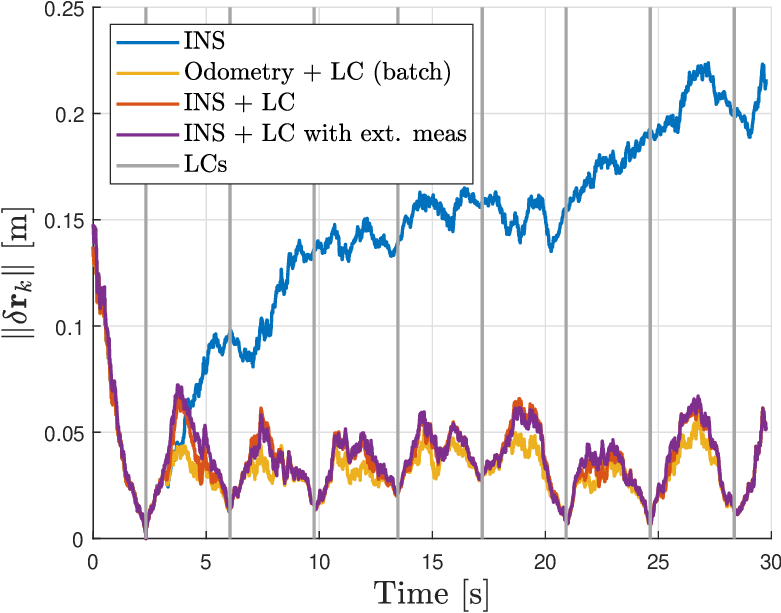}
  \caption{
  Results from simulated data showing the relative displacement error for different estimation solutions.
  The `INS' estimate is the dead-reckoned estimate, `{Odometry~+~LC~(batch)}' is the batch estimate using measurements generated from corrupted ground-truth data, \update{`{INS~+~LC}' and  `{INS~+~LC} with ext. meas' is the batch estimate computed via the methodology presented in \Cref{sec:Methodology}, where the former uses the exteroceptive measurements and the latter additionally uses exteroceptive measurements}.  `LCs' mark the timestamps at which the vehicle passes over the feature location.}
  \label{fig:simulation/with_and_without_exteroceptive_meas/kummerle_energy.eps}
\end{figure}

\begin{figure}[t]
  \centering
  \begin{subfigure}{0.45\textwidth}
    \includegraphics[width=\textwidth]{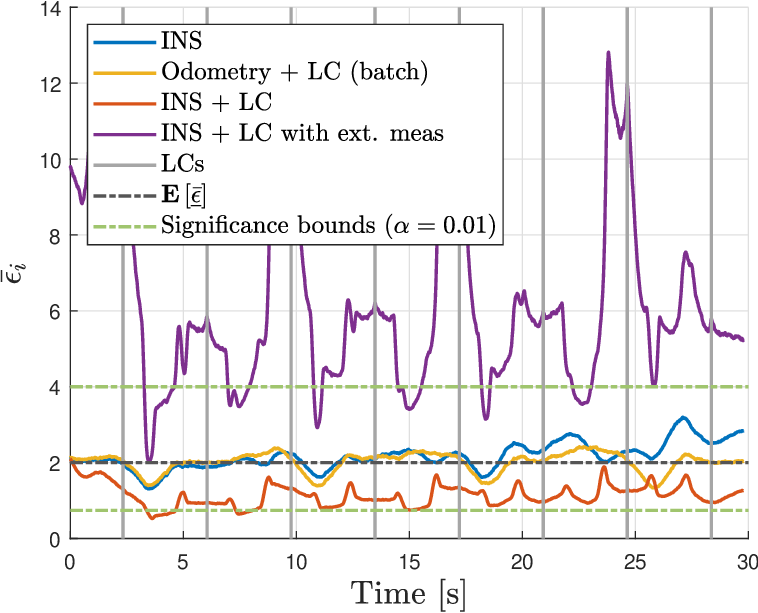}
    \caption{Average normalized estimation error squared (ANEES) test}
    \label{fig:simulation/with_and_without_exteroceptive_meas/mahalanobis.eps}
  \end{subfigure}
  \par\medskip
  \begin{subfigure}{0.45\textwidth}
    \includegraphics[width=\textwidth]{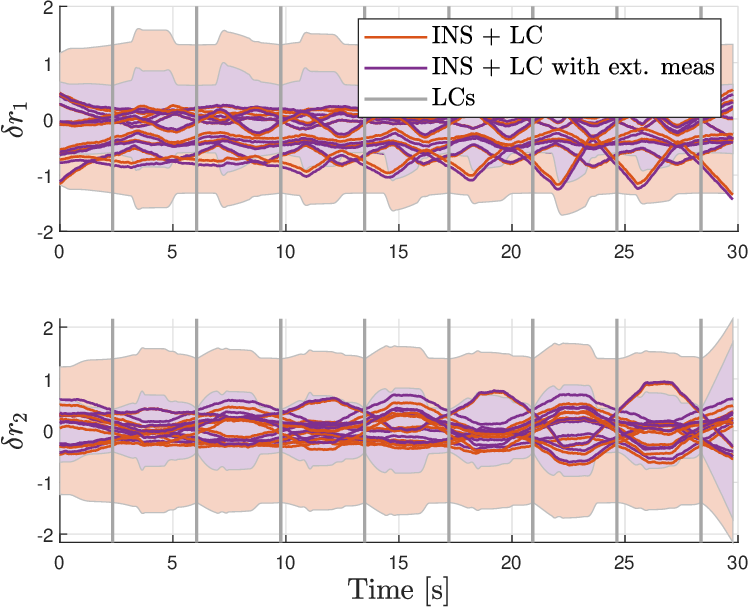}
    \caption{Mean displacement errors with $\pm3\sigma$ bounds}
    \label{fig:simulation/with_and_without_exteroceptive_meas/err_plt_blueOrange.eps}
  \end{subfigure}
  \caption{
    ANEES test and error plots on simulated data over \update{10}~Monte-Carlo trials.
    \update{The `INS' estimate is the dead-reckoned estimate, `{Odometry~+~LC~(batch)}' is the batch estimate using measurements generated from corrupted ground-truth data, and `{INS~+~LC}' and  `{INS~+~LC} with ext. meas' are the batch estimates computed via the methodology presented in \Cref{sec:Methodology}, where the latter uses the exteroceptive measurements and the former ignores them.  `LCs' mark the time stamps at which the vehicle passes over the same set of features.}
  }
  \label{fig:with_and_without_exteroceptive_meas/: ANEES and error plots on 10 MCT}
\end{figure}

The average relative error from the `INS~+~LC' estimates falls between the error from the dead-reckoned `INS' solutions and the error from the batch solutions using corrupted ground-truth measurements.  These results show that the average error associated with the `INS~+~LC' solution stays \update{relatively} bounded, as long as the vehicle passes over the first feature multiple times.

\begin{figure*}[t]
  \sbox\subfigbox{%
    \resizebox{\dimexpr0.9\textwidth-1em}{!}{%
      \includegraphics[height=3cm]{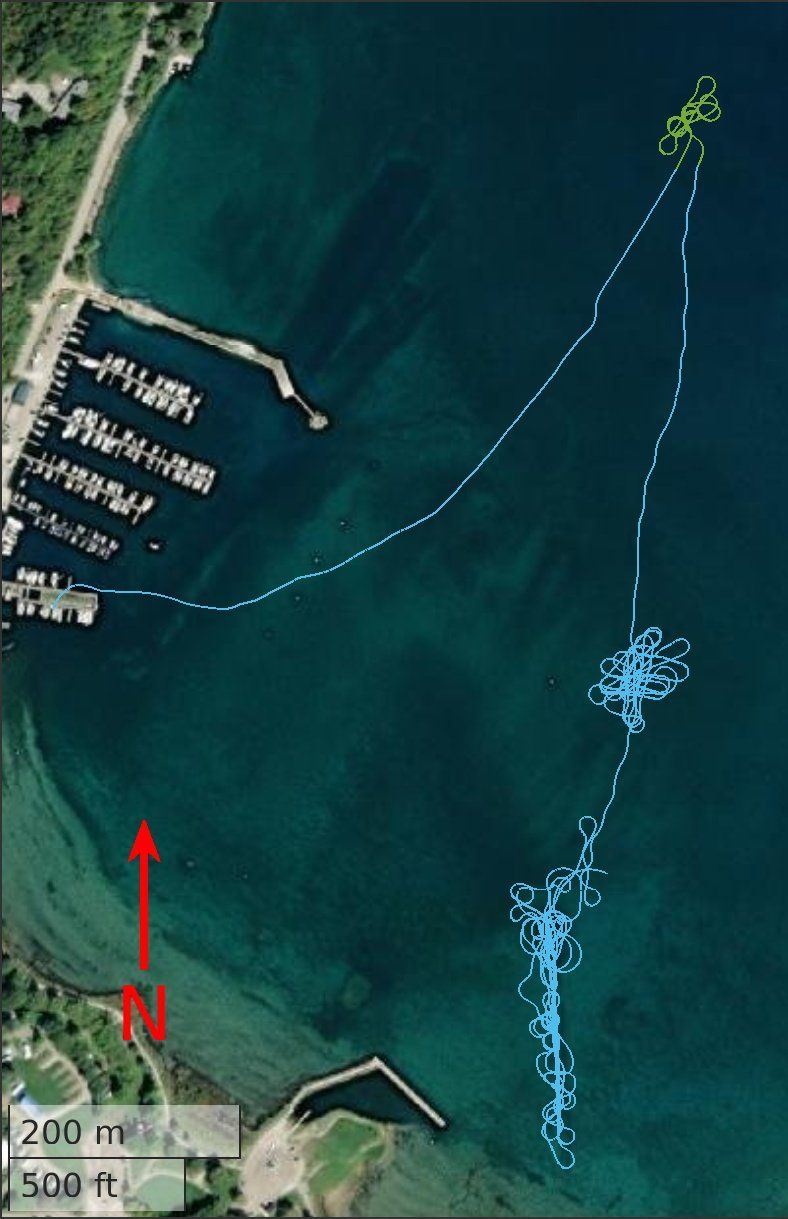}%
      \includegraphics[height=3cm]{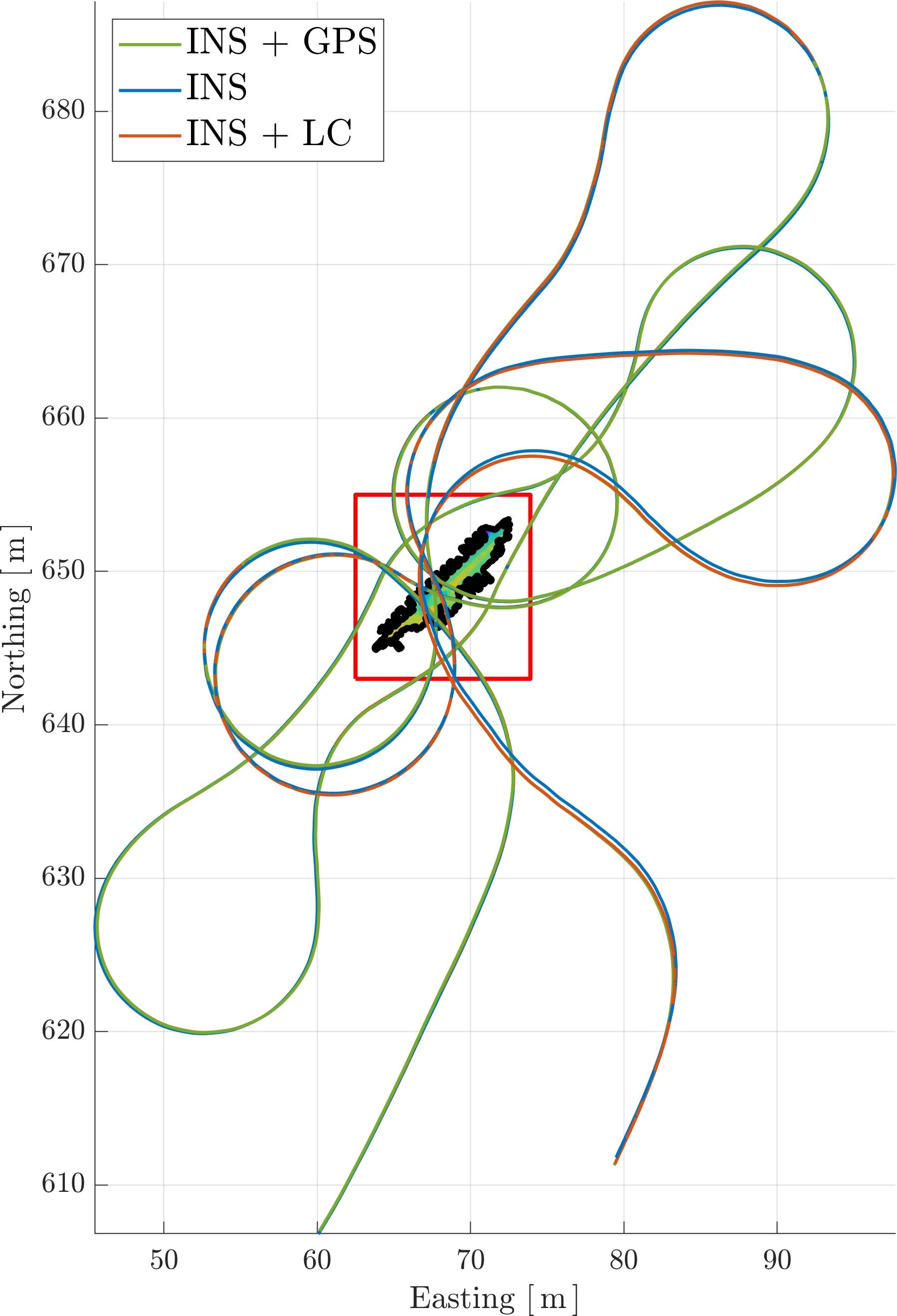}%
      \includegraphics[height=3cm]{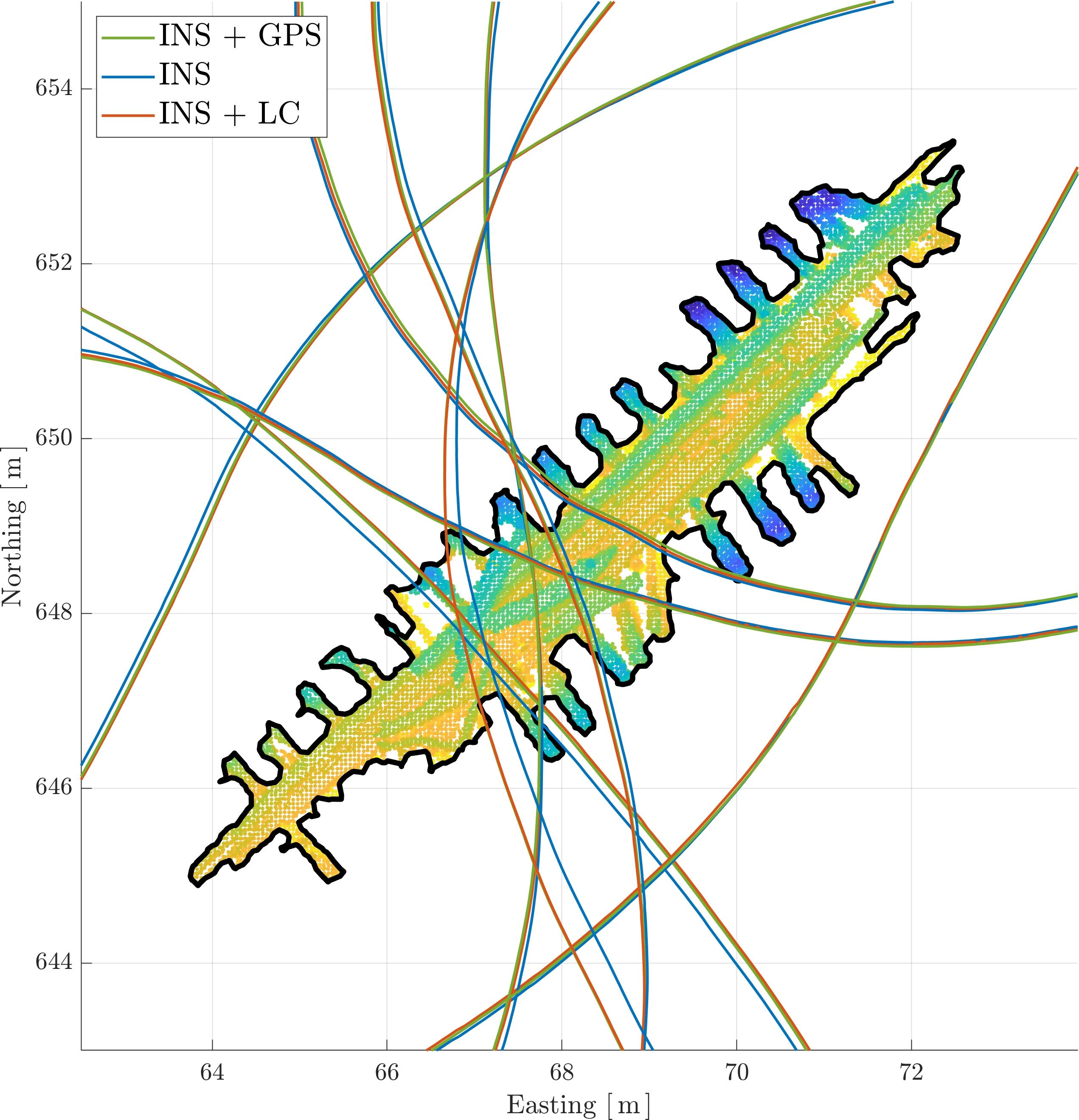}%
    }%
  }
  \setlength{\subfigheight}{\ht\subfigbox}
  \centering
  \subcaptionbox{The trajectory taken, with the region of interest in green.
    \label{fig:trajectory}}{%
    \includegraphics[height=\subfigheight]{figs/small/trajectory_map_cropped.jpg}
  }
  \quad
  \subcaptionbox{The region of interest, with different navigation solutions.
    The shipwreck area is highlighted. \label{fig:nav_solutions}}{%
    \includegraphics[height=\subfigheight]{figs/small/trajectories.jpg}
  }
  \quad
  \subcaptionbox{The zoomed region from \Cref{fig:nav_solutions}, showing the
    outline of the main shipwreck structure. \label{fig:nav_solutions_zoom}}{%
    \includegraphics[height=\subfigheight]{figs/small/trajectories_zoom_no_border.jpg}
  }
  \caption{
    Trajectory of experimental data collected in Colpoy's Bay, Wiarton, Ontario, Canada.
    The full trajectory in \Cref{fig:trajectory} is nearly \SI{7.5}{\kilo \meter} long, and
    took approximately \SI{2.4}{\hour} to collect.
    The section of interest is around \SI{0.58}{\kilo \meter} long, and took approximately \SI{10.5}{\minute} to collect.
    The trajectory makes eight passes over the main shipwreck structure.
  }
\end{figure*}

\renewcommand{\arraystretch}{1.25}
\begin{table*}[t]
  \centering
  \caption{Experimental navigation solutions}
  \label{tab:experimental navigation solutions}
  \begin{tabular}{@{}lp{0.8\textwidth}@{}}
    \toprule
    Solution
     & Description
    \\ \midrule
    INS~+~GPS
     &
    DVL-INS state estimates fused with \ac{GNSS} data from u-blox ZED-F9P high-precision \ac{GNSS} module.
    This estimate is used as ground truth when assessing the performance of the pipeline.
    \\
    INS
     & Dead-reckoned DVL-INS state estimates produced by a Sonardyne SPRINT-Nav~500~\cite{Sonardyne_Datasheet_2021}.
    Positioning precision was manually degraded by the industrial partner.
    \\
    INS~+~LC
     &
    Batch solution using estimated interoceptive measurements computed using the methodology in \Cref{sec:Methodology} and \ac{LC} measurements computed from Voyis Insight~Pro underwater scan data.
    \\
    \bottomrule
  \end{tabular}
\end{table*}

The overall consistency of a state estimate may be evaluated by computing the average normalized estimation error squared (ANEES) metric at each timestep \update{\cite[Sec.~5.4.3]{BarShalom_Estimation_2004}}.  \update{The ANEES metric follows a chi-square distribution with the degrees of freedom equal to the number of design variables.} \acused{ANEES}
The results of an ANEES test for all three solutions are presented in \Cref{fig:simulation/with_and_without_exteroceptive_meas/mahalanobis.eps}.  As the estimation problem involves two design variables (planar position), a value of ${\expect{\rv{\bar{\epsilon}}}=2}$ is expected.
Comparing the `INS', `Odometry~+~LC~(batch)', `INS~+~LC', and `INS~+~LC with ext. meas' solutions, the ANEES test shows that the proposed estimator is \update{overconfident when the exteroceptive measurements are included in the solution, but mildly underconfident when the exteroceptive measurements are ignored.
That is, the `INS~+~LC' estimated posterior covariances $\Ppost_{k-1}$ are smaller than the true covariances when the exteroceptive measurements are included, and larger than the true covariances otherwise.
This is also confirmed by the error plots in} \Cref{fig:simulation/with_and_without_exteroceptive_meas/err_plt_blueOrange.eps} showing the mean displacement errors and the $\pm3\sigma$ bounds.

\update{
As including the estimated exteroceptive measurements into the `INS~+~LC' solution produced inconsistent results, these measurements will be ignored.
However, ignoring the exteroceptive measurements results in a mildly underconfident estimator.  Tuning the confidence of the estimator will be addressed as part of future work.
}

\subsection{Experimental Data}
\label{sec:Experimental Data}

The full pipeline is tested on field data collected by industry partner Voyis~Imaging~Inc.  The data was collected in Colpoy's Bay, located in Wiarton, Ontario, Canada.  The full mission trajectory was nearly \SI{7.5}{\km} long and is shown in \Cref{fig:trajectory}.
A \SI{0.58}{\kilo \meter} section of the trajectory traversed a shipwreck area 8~times from which the laser data is used to compute \ac{LC} measurements.
A zoomed-in section of the trajectory is provided in \Cref{fig:nav_solutions,sub@fig:nav_solutions_zoom} along with the point-cloud scan of the shipwreck.

The sensor suite mounted on the surface vessel included a Sonardyne SPRINT-Nav~500 DVL-aided INS~\cite{Sonardyne_Datasheet_2021}, a Voyis~Insight~Pro underwater laser scanner, and a
u-blox {ZED-FP9} high-precision \ac{GNSS} module~\cite{Ublox_ZEDF9P02B_Datasheet_2021} equipped with a u-blox ANN-MB series high-precision multi-band antenna~\cite{Ublox_ANNMB_Datasheet_2021}. The positioning estimates from the \ac{GNSS} module were first processed using the Canadian Spacial Reference System
Precise Point Positioning (CSRS-PPP) application~\cite{Tetreault2005}, then fused with the DVL-INS to provide a high-precision position estimate of the vessel. These estimates are referred to as `INS~+~GPS' in this letter and are used as a ground-truth when assessing the performance of the pipeline.  The high-precision estimate was then reprocessed by the industry partner to remove the GNSS correction and inject additional position drift.  These estimates are referred to as `INS' estimates in this letter.  The laser measurements from the Voyis~Insight~Pro laser scanner are used along with the `INS' estimates to compute \ac{LC} measurements.  These \ac{LC} measurements are then used with the estimated raw measurements computed using the methodology presented in \Cref{sec:Methodology}, and the posterior estimates are referred to as `INS~+~LC.'  A summary of these three solutions is provided in \Cref{tab:experimental navigation solutions}.


\Cref{fig:wiarton/kummerle_energy.eps} shows the relative error metric from \Cref{sec:Performance Metric} computed for the `INS' and `INS~+~LC' solutions, where the `INS~+~GPS' solution is considered to be ground-truth.  The results show that the average error drops as the number of \acp{LC} increases. Therefore, theoretically, the error should remain bounded as long as there are recurring \acp{LC}.
The error for the dead-reckoned `INS' solution is \SI{0.1095}{\percent} of the total distance travelled (\SI{0.58}{\kilo \meter}).  After estimating sensor measurements and incorporating all loop closures (the `INS~+~7~LC' solution), this error drops to \SI{3.617e-3}{\percent} of total distance travelled, representing an improvement of more than \textbf{30 times}.  Furthermore, \Cref{fig:wiarton/kummerle_energy_LC7Only.eps} shows the effect of using a single \ac{LC} measurement.


A qualitative comparison is given by registering the laser profiles to the different trajectory estimates to produce point-cloud submaps.
\Cref{fig:Visualizing point clouds wiarton} shows the point clouds generated using the true (`INS~+~GPS'), prior (`INS'), and posterior (`INS~+~LC') trajectories. The point clouds in \Crefrange{fig:elevation_gt}{fig:elevation_posterior} are colour-coded by depth, whereas the clouds in \Crefrange{fig:disp_gt}{fig:disp_posterior} are colour-coded by geometric disparity~\cite{Roman_Consistency_2006}.

The `INS~+~LC' point~cloud in \Cref{fig:elevation_posterior} is more refined and `crisp' than the `INS' point~cloud in \Cref{fig:elevation_prior}.  The improvement is more visible in the zoomed-in images in the bottom row of \Cref{fig:shipwreck_passes}, where annotations highlight specific areas of the scan.  Furthermore, the posterior disparity in \Cref{fig:disp_posterior} has a higher concentration of blue points than the prior disparity in \Cref{fig:disp_prior}, indicating less severe disparity errors.  The red circles in these figures highlight two areas for which the posterior point~cloud has a lower disparity.

It should be noted that \Cref{fig:disp_prior} contains one green point-cloud section with high disparity values.  These errors are due to a bias in the {DVL-INS} depth estimate on one of the passes.
As shown in \Cref{fig:ipe/flow_chart_full.eps}, the proposed pipeline does not currently correct for errors in depth, and thus this bias remains in the posterior estimate.



\begin{figure}[tb]
  \centering
  \includegraphics[width=\columnwidth]{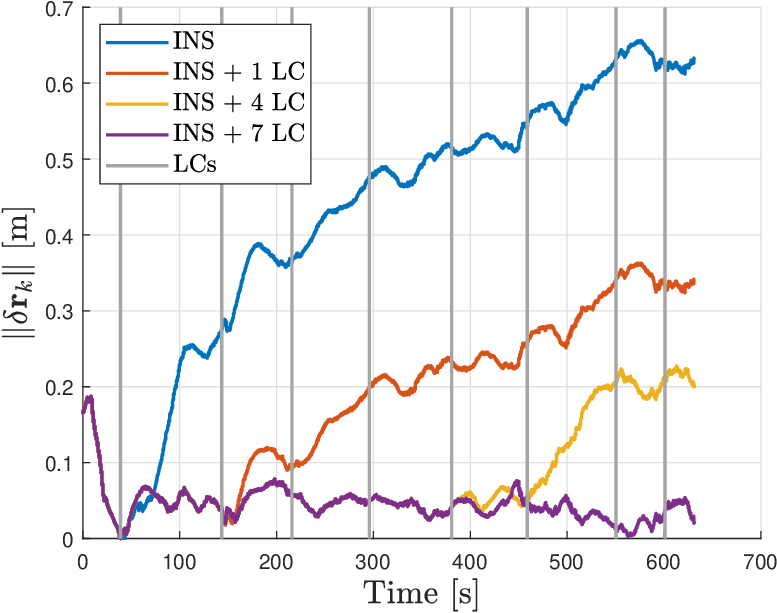}
  \caption{
    Relative displacement error $\| \delta \mbfhat{r}_k \|$ declines as loop closures are incorporated into the experimental data.  The shipwreck area was first observed around \SI{40}{\s}, with all LC measurements computed relative to the first observation.  Note that a Kalman filter run using the estimated measurements would overlap exactly with the `INS' estimate.
  }
  \label{fig:wiarton/kummerle_energy.eps}
\end{figure}
\begin{figure}[tb]
  \centering
  \includegraphics[width=\columnwidth]{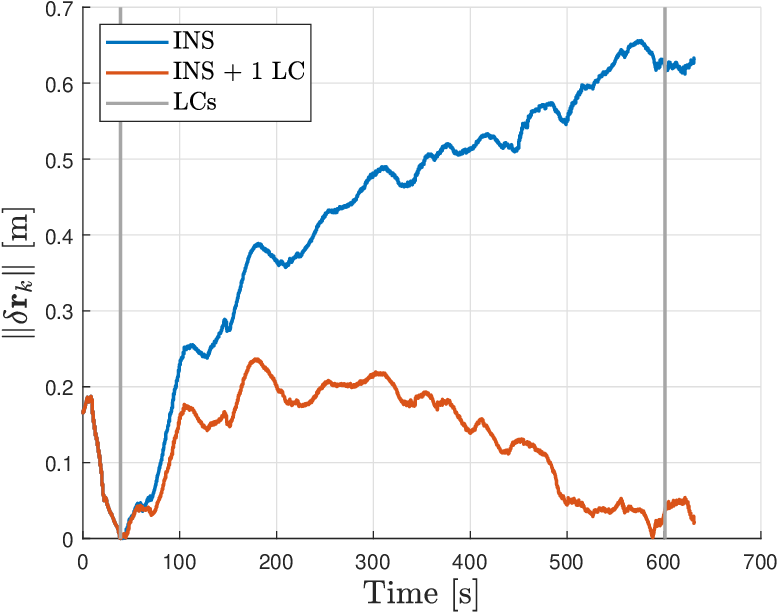}
  \caption{
    Relative displacement error $\| \delta \mbfhat{r}_k \|$ incorporating only the \textit{last} LC measurement into the field data using the proposed pipeline.
    Note how the \ac{LC} correction propagates smoothly to other poses between the loop closure locations.
  }
  \label{fig:wiarton/kummerle_energy_LC7Only.eps}
\end{figure}

\begin{figure*}[pt]
  \centering
  \begin{multicols}{2}
    \begin{subfigure}{\columnwidth}
      \includegraphics[width=\textwidth]{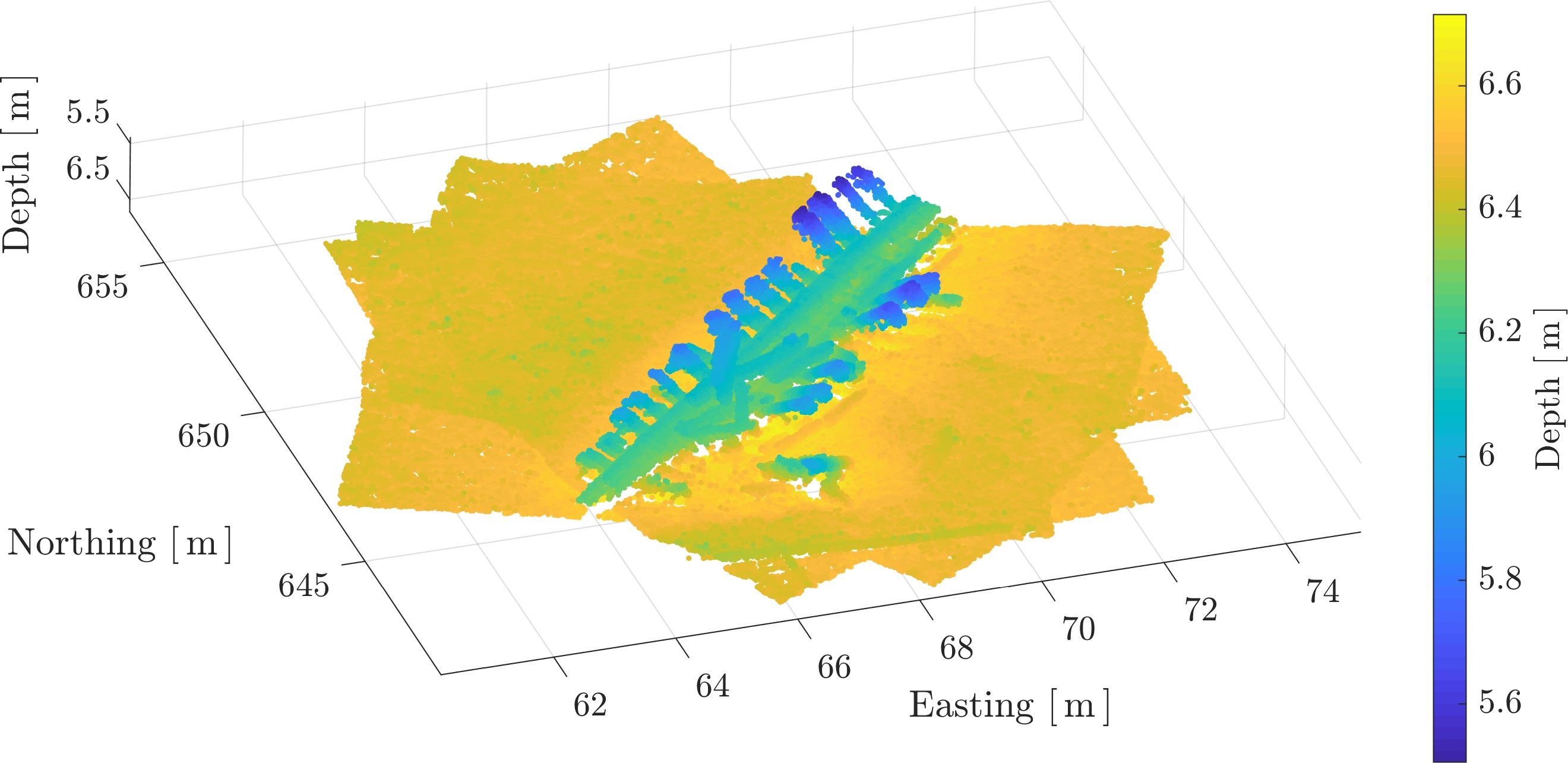}
      \caption{Ground-truth elevation (`INS~+~GPS')}
      \label{fig:elevation_gt}
    \end{subfigure}
    \begin{subfigure}{\columnwidth}
      \includegraphics[width=\textwidth]{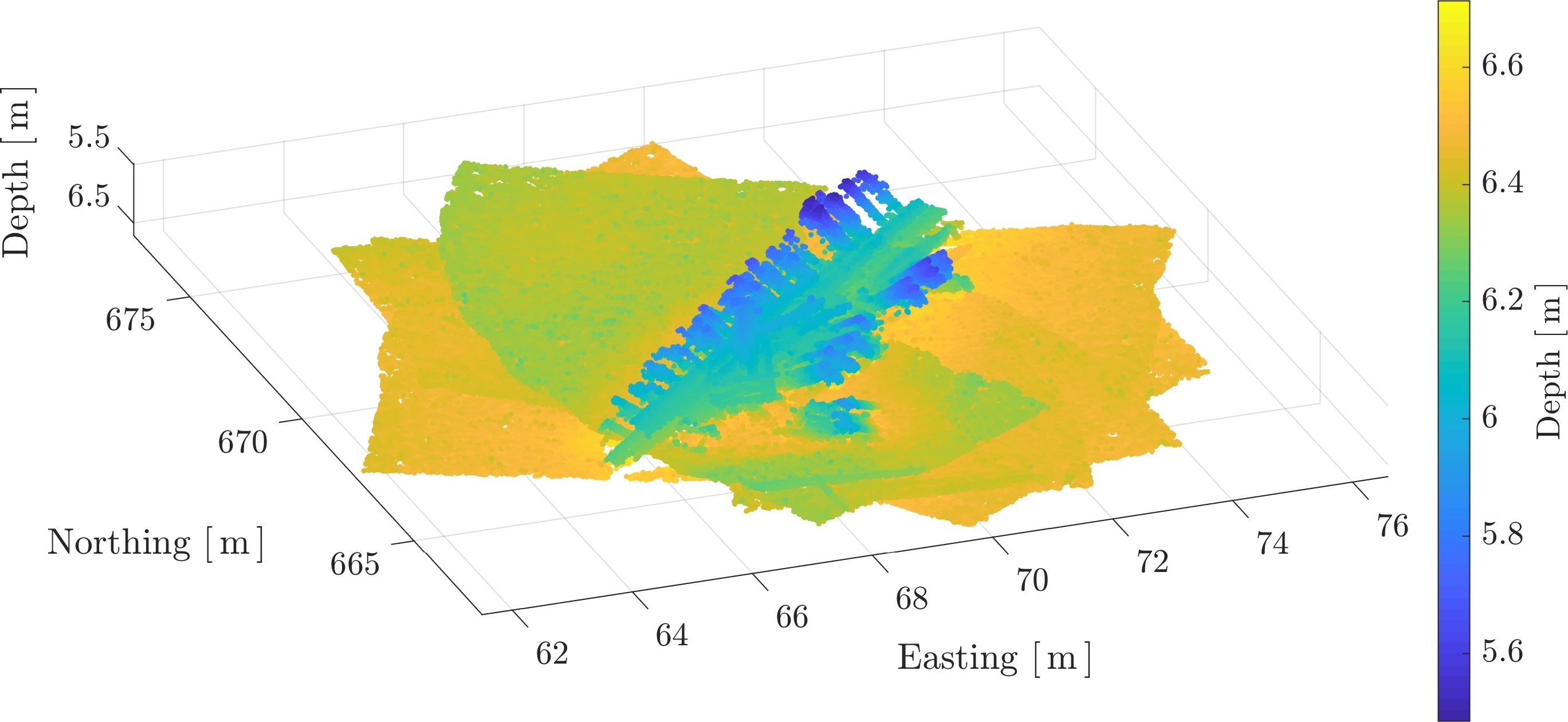}
      \caption{Prior elevation (`INS')}
      \label{fig:elevation_prior}
    \end{subfigure}
    \begin{subfigure}{\columnwidth}
      \includegraphics[width=\textwidth]{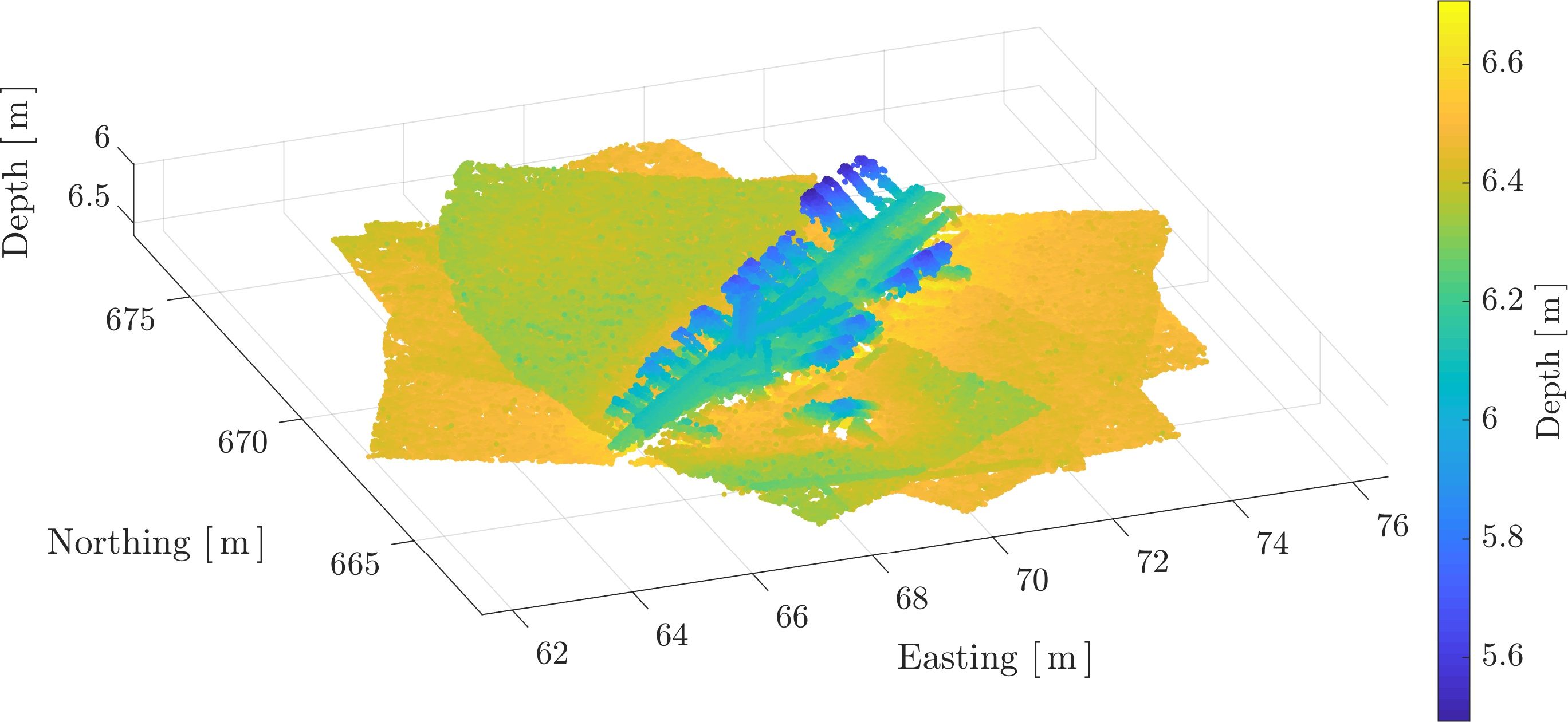}
      \caption{Posterior elevation (`INS~+~LC')}
      \label{fig:elevation_posterior}
    \end{subfigure}
    \begin{subfigure}{\columnwidth}
      \includegraphics[width=\textwidth]{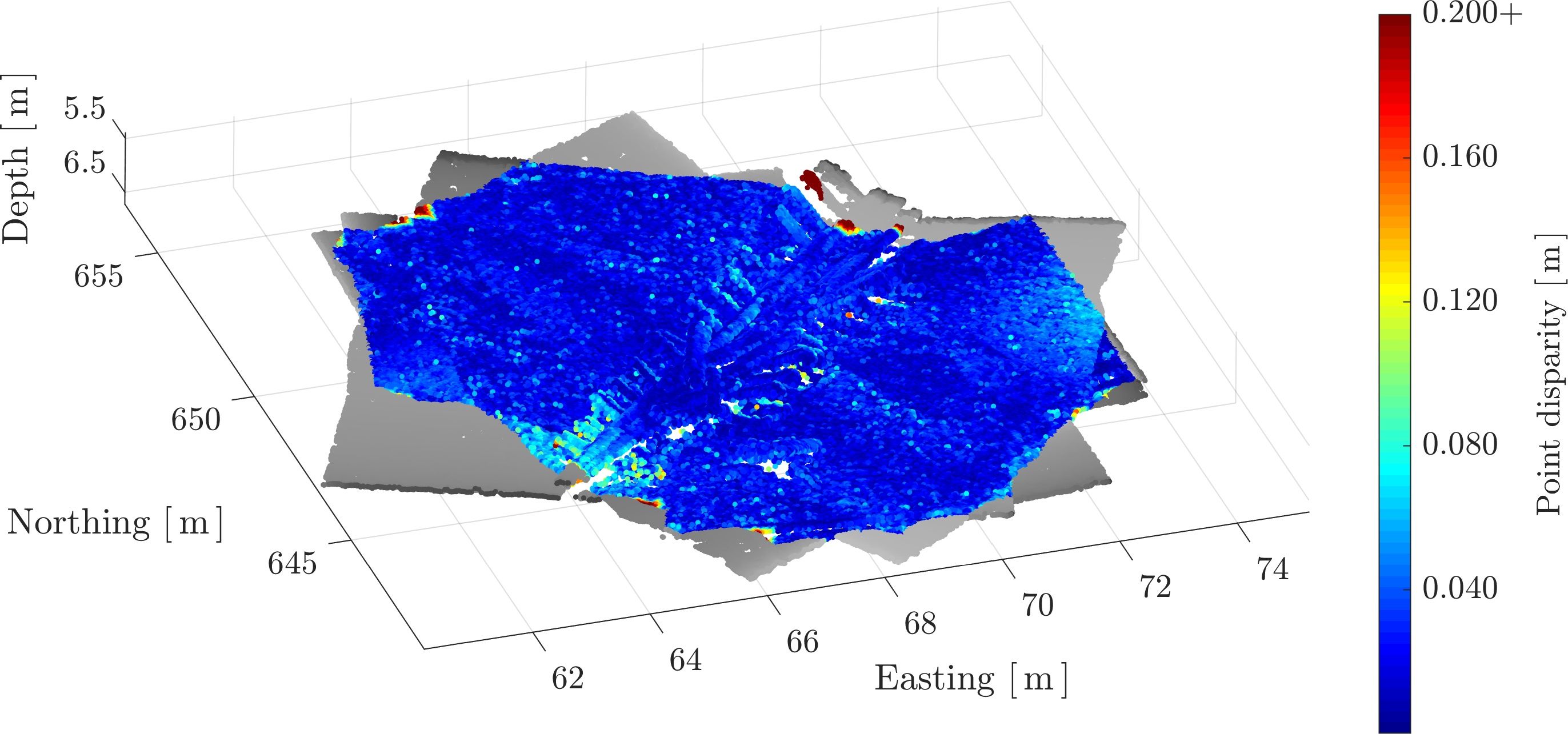}
      \caption{Ground-truth disparity (`INS~+~GPS')}
      \label{fig:disp_gt}
    \end{subfigure}
    \begin{subfigure}{\columnwidth}
      \includegraphics[width=\textwidth]{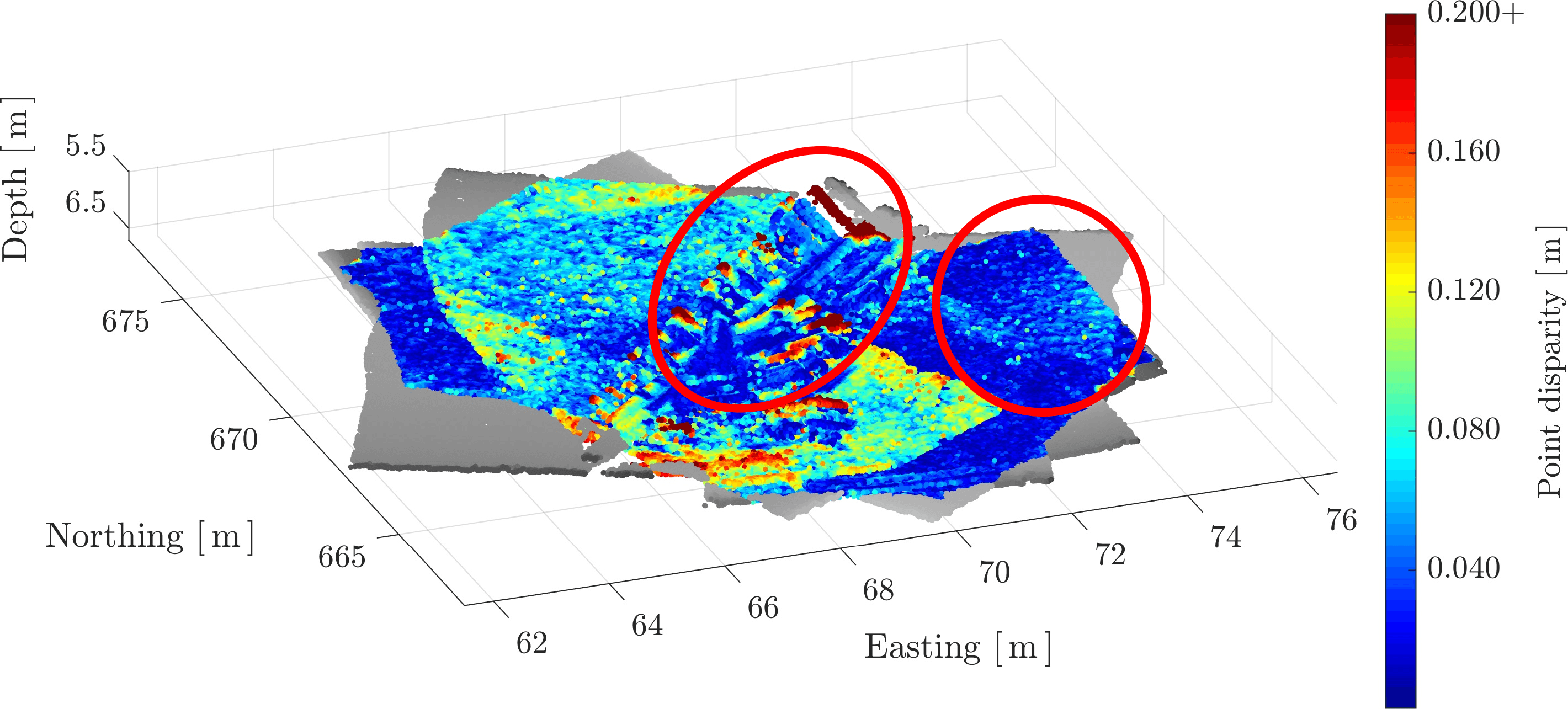}
      \caption{Prior disparity (`INS')}
      \label{fig:disp_prior}
    \end{subfigure}
    \begin{subfigure}{\columnwidth}
      \includegraphics[width=\textwidth]{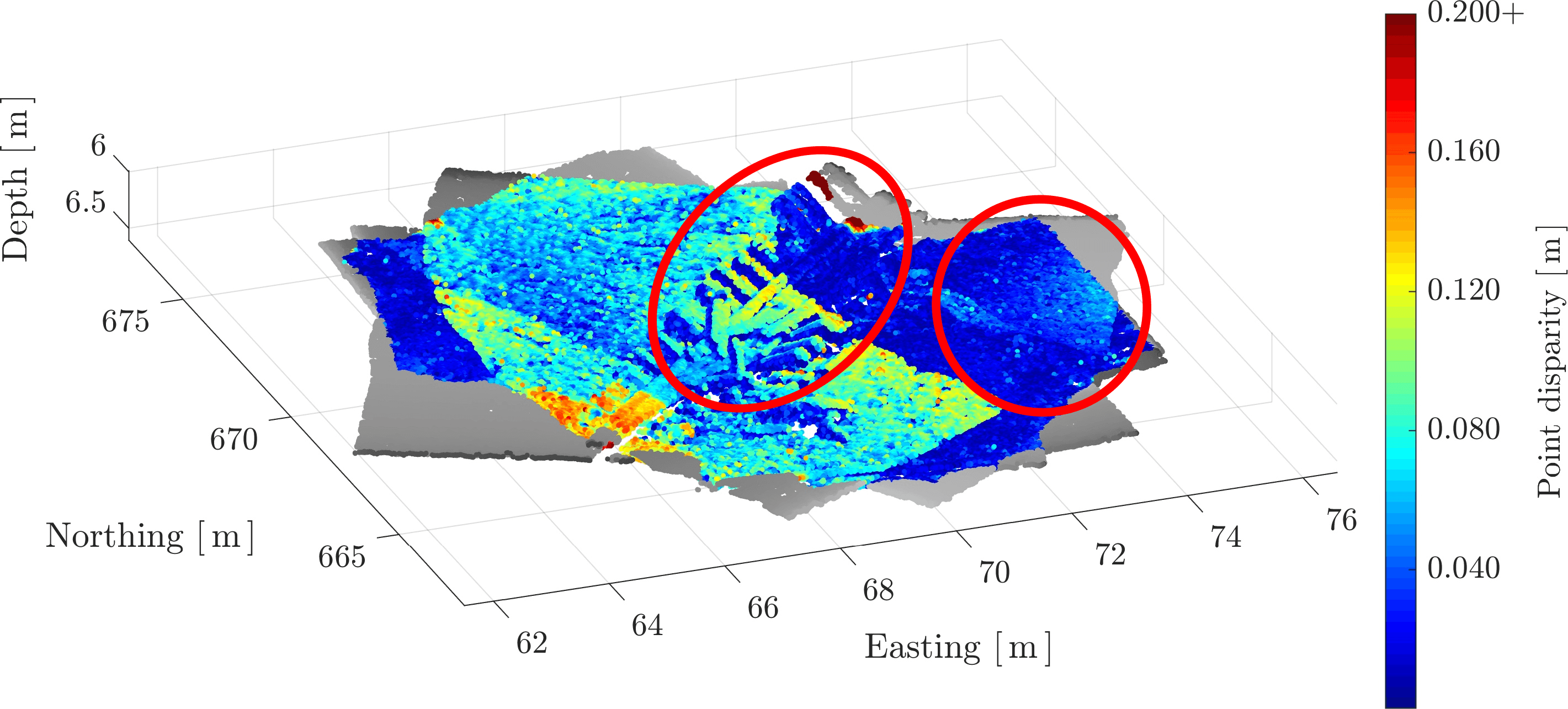}
      \caption{Posterior disparity (`INS~+~LC')}
      \label{fig:disp_posterior}
    \end{subfigure}
  \end{multicols}
  \caption{
    The shipwreck area before and after batch estimation. The left column shows point clouds coloured by depth, whereas the right column shows point clouds coloured by disparity error~\cite{Roman_Consistency_2006}.
    The posterior disparity map \Cref{fig:disp_posterior} has fewer red points (see red ellipses), indicating a reduction in disparity error.  Note the green patch in \Crefrange{fig:disp_prior}{fig:disp_posterior} is due to a bias in the `INS' depth estimate $\tilde{z}$ on the first pass over the wreck, which cannot be corrected by the proposed pipeline in current form.
  }
  \label{fig:Visualizing point clouds wiarton}
\end{figure*}

\begin{figure*}[tb]
  \centering
  \begin{subfigure}{0.42\textwidth}
    \includegraphics[width=\textwidth]{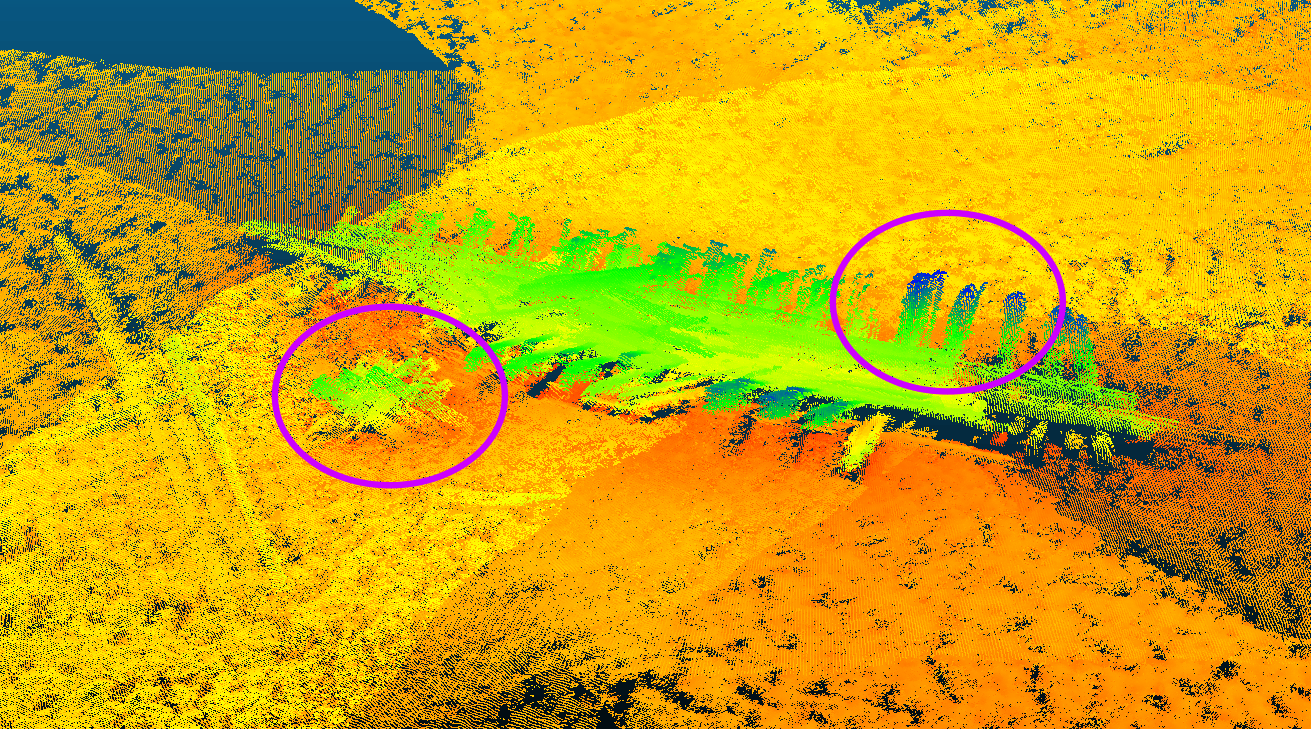}
    \caption{Prior elevation (`INS').}
    \label{fig:cc_elevation_prior}
  \end{subfigure}
  \vspace{6pt}
    \quad
\begin{subfigure}{0.42\textwidth}
    \includegraphics[width=\textwidth]{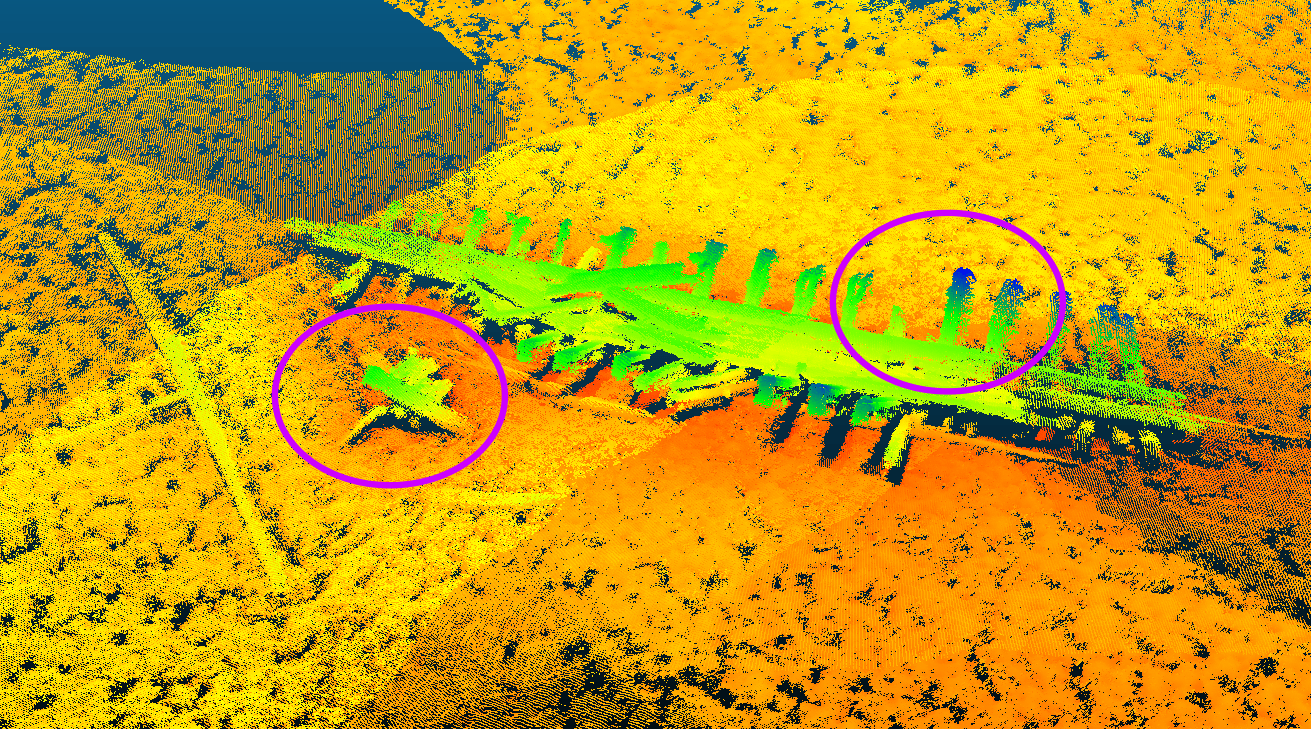}
    \caption{Posterior elevation (`INS~+~LC').}
    \label{fig:cc_elevation_posterior}
  \end{subfigure}

  \begin{subfigure}{0.42\textwidth}
    \includegraphics[width=\textwidth]{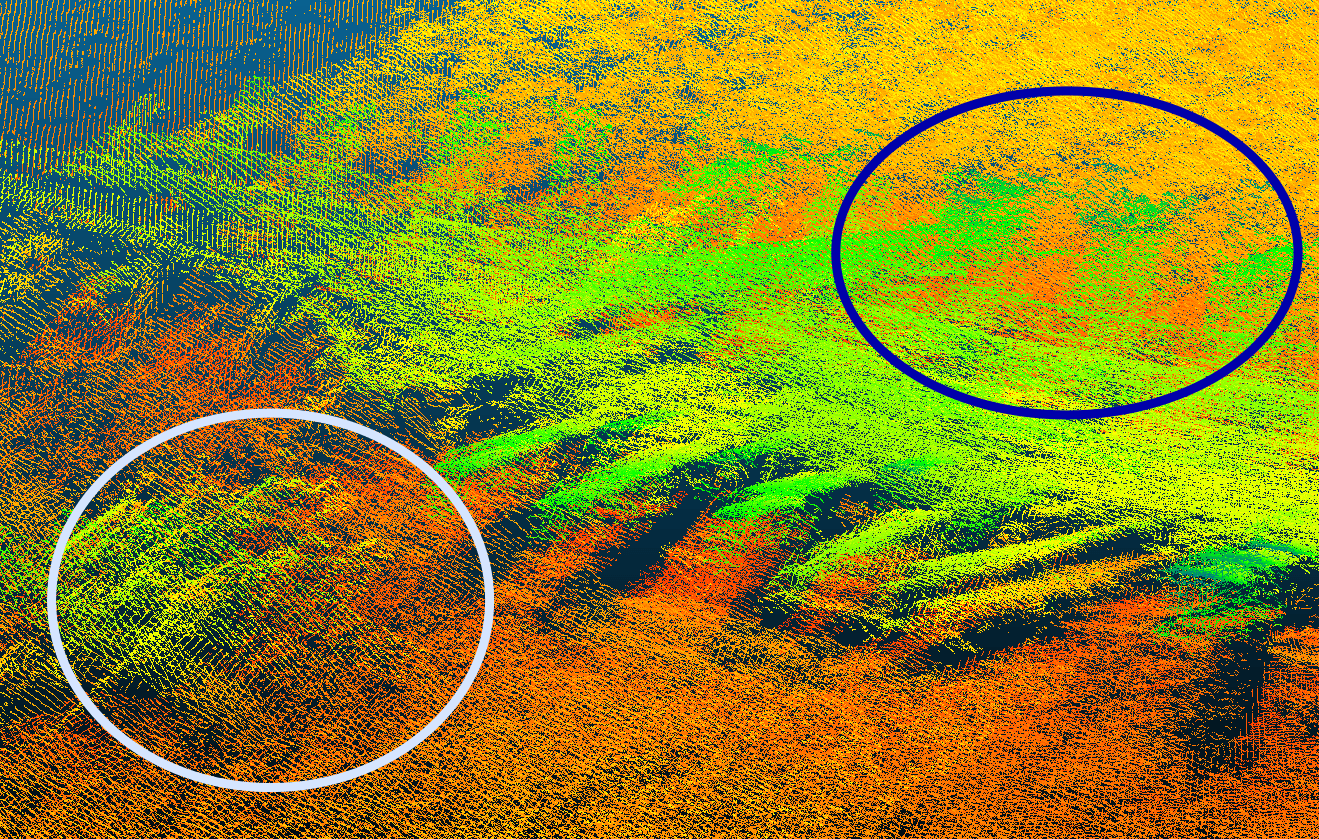}
    \caption{Prior elevation (`INS'), zoom.}
    \label{fig:cc_elevation_prior_zoom}
  \end{subfigure}
  \quad
  \begin{subfigure}{0.42\textwidth}
    \includegraphics[width=\textwidth]{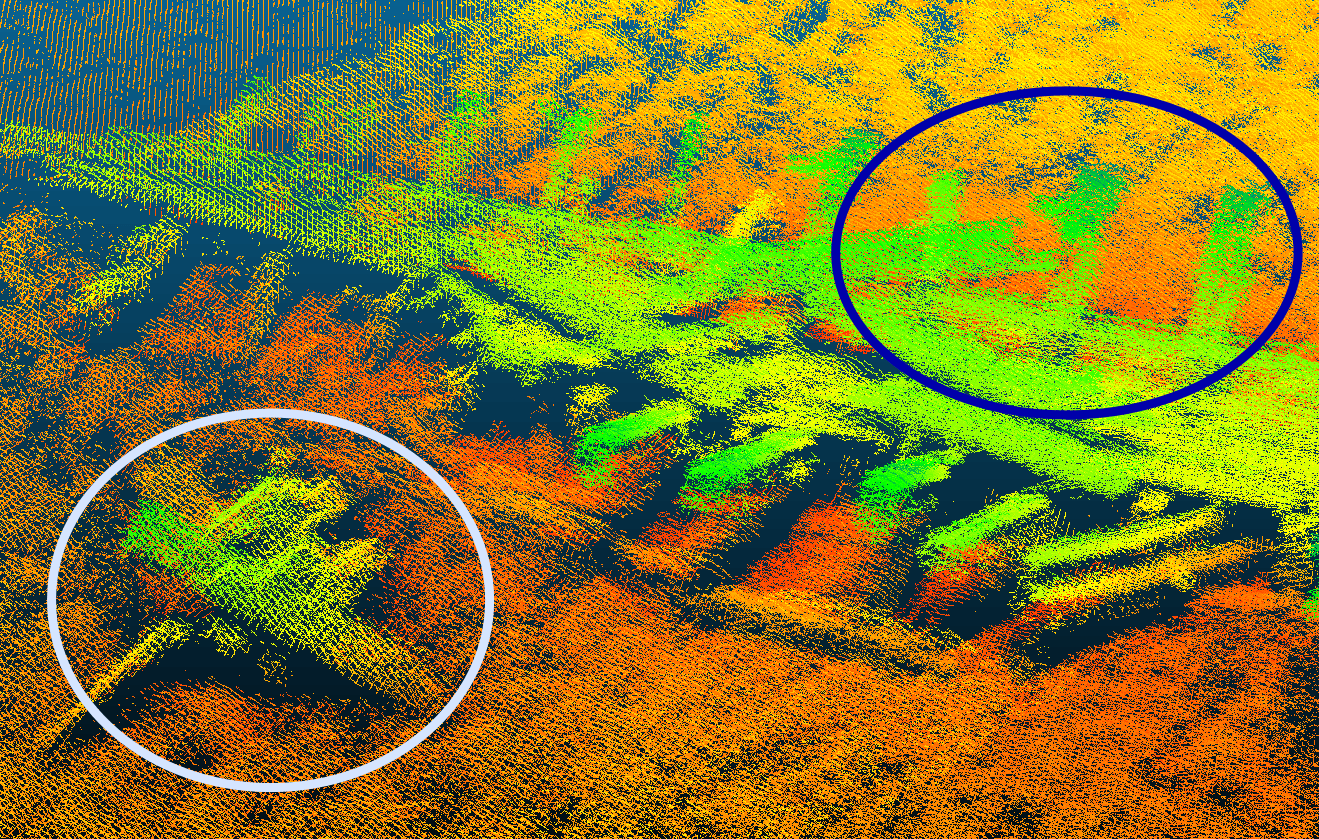}
    \caption{Posterior elevation (`INS~+~LC'), zoom.}
    \label{fig:cc_elevation_posterior_zoom}
  \end{subfigure}
  \caption{
    Images of the shipwreck area taken using CloudCompare~\cite{cloudcompare}, comparing the prior (`INS', left
    column) and posterior (`INS~+~LC', right column) navigation solutions.
    The bottom row shows a zoom of the shipwreck features.
  }
  \vspace{-3pt}
  \label{fig:shipwreck_passes}
\end{figure*}

\section{Conclusion}
\label{sec:Conclusion}
In this letter, the challenge of fusing measurements with processed state estimates in the absence of raw interoceptive measurements is addressed.
Specifically, loop-closure measurements computed using point-cloud scans from a Voyis~Insight~Pro underwater laser scanner are used to correct displacement estimates from a commercial DVL-INS.  The raw interoceptive measurements are not accessible, but are estimated from the {DVL-INS} output using convex optimization tools.  Estimated measurements are then used in a batch framework to smoothly propagate the effects of the LC corrections throughout the entire trajectory.


The simulated results demonstrate that the pipeline reduces a relative displacement error and that the posterior estimates computed via the method presented in \Cref{sec:Methodology} produce comparable results to the state estimates computed using corrupted ground-truth measurements.  Furthermore, the pipeline is also tested on experimental data collected during a field deployment.
The posterior trajectory generated using \ac{LC} measurements and the estimated interoceptive measurements showed a reduction in relative displacement error of more than 30~times compared to the estimates from the DVL-INS.

Future work would focus on \update{tuning the confidence of the posterior state estimate and} extending this pipeline to 3D by working on $SE(3)$ poses in order to update the depth and attitude estimates.

\appendices

\section{Deriving the Kalman Filter Equations}
\label{apx:Deriving the Kalman Filter Equations}
Consider the linear process model
\begin{align}
  \mbfrv{x}_{k}
   & =  \mbf{A}\mbfrv{x}_{k-1} + \mbf{B} \mbf{u}_{k-1} + \mbfrv{w}_{k-1},
\end{align}
where ${\mbfrv{x}_{k-1}\sim\mc{N}(\mbfhat{x}_{k-1}, \mbfhat{P}_{k-1})}$ is the state estimate at the previous time step and $\mbfrv{w}_{k-1}\sim\mc{N}\left(\mbf{0}, \mbf{Q}_{k-1} \right)$ is the process noise.
Furthermore, let the linear measurement model be
\begin{align}
  \mbfrv{y}_{k}
   & = \mbf{H}_{k} \mbfrv{x}_{k} + \mbf{M}_{k}\mbfrv{n}_{k},
\end{align}
where $\mbfrv{n}_{k}\sim\mc{N}\left(\mbf{0}, \mbf{R}_{k} \right)$ is the measurement noise.
%
%
The \ac{MAP} estimate is given by
\begingroup
\allowdisplaybreaks
\begin{align}
  \mbfhat{x}_{k}
   & =
  \argmax_{\mbf{x}_{k}\in \rnums^{n} } \pdf{\mbf{x}_{k}\mid \mbfhat{x}_{k-1}, \mbf{u}_{k-1}, \mbf{y}_{k}}
  \\
   & =
  \argmax_{\mbf{x}_{k}\in \rnums^{n} }
  \pdf{\mbf{y}_{k} \mid \mbf{x}_{k}}\pdf{\mbf{x}_{k}\mid \mbfhat{x}_{k-1}, \mbf{u}_{k-1}}
  \\
  \label{eq:mbfhatx as MAP}
   & =
  \argmax_{\mbf{x}_{k}\in \rnums^{n} }
  \eta
  \exp
  \left(
  -\f{1}{2}
  \norm{
    \mbf{y}_{k} - \mbf{H}_{k}\mbf{x}_{k}
  }^{2}_{\mbf{R}_{k}\inv}
  \right)
  \nonumber                                \\
   & \mathrel{\phantom{=}} \negmedspace {}
  \cdot
  \exp
  \left(
  -\f{1}{2}
  \norm{
    \mbf{x}_{k} - \mbf{A}_{k}\mbfhat{x}_{k-1} - \mbf{B}\mbf{u}_{k-1}
  }^{2}_{
    \mbf{Q}_{k-1}\inv
  }
  \right),
\end{align}
\endgroup
where $\eta$ is a normalizing constant.

Taking the negative log of \eqref{eq:mbfhatx as MAP} results in the equivalent least-squares optimization problem
\begin{align}
  \label{eq:mbfxhatk as LS}
  \mbfhat{x}_{k}
   & = \argmin_{\left\{ \mbf{x}_{k}\in \rnums^{n} \right\}} \f{1}{2}\norm{\mbf{J}_{k}\mbf{x}_{k} - \mbf{b}_{k}}^2_{\mbs{\Sigma}_{k}\inv},
\end{align}
where
\begingroup
\allowdisplaybreaks
\begin{align}
  \label{eq:KF LS matrices: J}
  \mbf{J}_{k}      & = \bbm \eye                                                        \\\linsysC_{k} \ebm,\\
  \label{eq:KF LS matrices: b}
  \mbf{b}_{k}
                   & =
  \bbm
  \linsysA\mbfhat{x}_{k-1} + \linsysB\mbf{u}_{k-1}
  \\
  \mbf{y}_{k}
  \ebm,
  \\
  \label{eq:KF LS matrices: Sigma}
  \mbs{\Sigma}_{k} & = \bbm \linsysA\mbfhat{P}_{k-1}\linsysA^{\trans} + \mbf{Q}_{k-1} & \\ & \linsysM_{k}\mbf{R}_{k}\linsysM_{k}^{\trans}\ebm.
\end{align}
\endgroup
The information matrix on the posterior estimate $\mbfhat{x}_{k}$ is~\cite{Barfoot_State_2017}
\begin{align}
  \label{eq:mbfhatPk begin}
  \mbfhat{P}_{k}\inv
   & = \cov{\mbfhatrv{x}_{k}}\inv                                                                                                                                            \\
   & = \mbf{J}^{\trans}\mbs{\Sigma}\inv\mbf{J}                                                                                                                              \\
  \label{eq:mbfhatPk end}
   & = \left(\linsysA\mbfhat{P}_{k-1}\linsysA^{\trans} + \mbf{Q}_{k-1} \right)\inv + \linsysC_{k}^{\trans} (\linsysM_{k}\mbf{R}_{k}\linsysM_{k}^{\trans})\inv\linsysC_{k}.
\end{align}
The optimal estimate is then
\begin{align}
  \label{eq:xhatk (KF) LS solution}
  \mbfhat{x}_{k} & = \underbrace{\left(\mbf{J}^{\trans}\mbs{\Sigma}\inv\mbf{J} \right)\inv}_{\mbfhat{P}_{k}}\mbf{J}^{\trans}\mbs{\Sigma}\inv\mbf{z}_{k} \\
                 & = \mbfhat{P}_{k}\mbf{J}^{\trans}\mbs{\Sigma}\inv \bbm \linsysA\mbfhat{x}_{k-1} + \linsysB\mbf{u}_{k-1}                                  \\ \mbf{y}_{k} \ebm.
\end{align}

\section{Inverting Strict LMIs}
\begin{lemma}
  \label{lemma:invertible strict LMI}
  Let $\mbf{X}, \mbf{Y}\in\mbb{S}^{n}$ be two positive definite matrices (\ie, $\mbf{X}, \mbf{Y} > 0$).
  Then, the relation
  \begin{align}
    \label{eq:lemma inverting strict LMI: Xinv - Y > 0}
    \mbf{X}\inv - \mbf{Y} & > 0
  \end{align}
  holds if and only if
  \begin{align}
    \label{eq:lemma inverting strict LMI: X - Yinv < 0}
    \mbf{X} - \mbf{Y}\inv & < 0.
  \end{align}
\end{lemma}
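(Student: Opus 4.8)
The plan is to prove the equivalence by exploiting the fact that both $\mbf{X}$ and $\mbf{Y}$ are positive definite, so conjugation by $\mbf{X}^{1/2}$ (or equivalently by $\mbf{Y}^{1/2}$) is a congruence transformation that preserves definiteness, and matrix inversion reverses the order of positive definite matrices. First I would start from \eqref{eq:lemma inverting strict LMI: Xinv - Y > 0}, namely $\mbf{X}\inv - \mbf{Y} > 0$, which is equivalent to $\mbf{Y} < \mbf{X}\inv$. Since inversion is order-reversing on the cone of positive definite matrices — i.e., for $\mbf{P}, \mbf{Q} > 0$ one has $\mbf{P} < \mbf{Q}$ if and only if $\mbf{Q}\inv < \mbf{P}\inv$ — applying this to $\mbf{Y} < \mbf{X}\inv$ (both sides being positive definite, the latter because $\mbf{X} > 0$) yields $(\mbf{X}\inv)\inv < \mbf{Y}\inv$, that is, $\mbf{X} < \mbf{Y}\inv$, which is exactly \eqref{eq:lemma inverting strict LMI: X - Yinv < 0} after rearranging to $\mbf{X} - \mbf{Y}\inv < 0$. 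The converse direction is identical, running the same order-reversing step backwards, so the ``if and only if'' follows immediately.

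The one ingredient that deserves a short justification is the order-reversing property of inversion. I would establish it as follows: if $\mbf{P} < \mbf{Q}$ with $\mbf{P}, \mbf{Q} > 0$, then conjugating by $\mbf{P}^{-1/2}$ gives $\eye < \mbf{P}^{-1/2}\mbf{Q}\mbf{P}^{-1/2}$, so every eigenvalue of $\mbf{P}^{-1/2}\mbf{Q}\mbf{P}^{-1/2}$ exceeds $1$, hence every eigenvalue of its inverse $\mbf{P}^{1/2}\mbf{Q}\inv\mbf{P}^{1/2}$ is less than $1$, giving $\mbf{P}^{1/2}\mbf{Q}\inv\mbf{P}^{1/2} < \eye$; conjugating back by $\mbf{P}^{-1/2}$ yields $\mbf{Q}\inv < \mbf{P}\inv$. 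Alternatively, this is a standard fact available in references such as \cite{Boyd_Convex_2004, Barfoot_State_2017} and could simply be cited.

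I do not anticipate a genuine obstacle here; the result is elementary. The only point requiring care is bookkeeping: making sure that at each step the matrices being inverted are indeed positive definite so that the order-reversal and congruence arguments apply — specifically that $\mbf{X}\inv > 0$ (from $\mbf{X} > 0$) and $\mbf{Y}\inv > 0$ (from $\mbf{Y} > 0$), which are immediate. I would present the proof as a short chain of equivalences, each annotated with the reason (congruence by a nonsingular matrix preserves sign-definiteness; inversion reverses the positive definite order), and close by noting that the argument is fully reversible, establishing both implications at once.
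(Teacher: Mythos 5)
Your proof is correct, but it takes a genuinely different route from the paper. You rely on the antitonicity of matrix inversion on the positive definite cone — from $\mbf{X}\inv - \mbf{Y} > 0$, i.e.\ $\mbf{Y} < \mbf{X}\inv$, inversion reverses the order to give $\mbf{X} < \mbf{Y}\inv$, and the argument runs backwards verbatim — justified by a congruence-plus-eigenvalue argument (conjugate by $\mbf{P}^{-1/2}$, observe all eigenvalues exceed one, invert, conjugate back). The paper instead proves each implication separately via the Sherman--Morrison--Woodbury identity: it writes $\left(\mbf{X}\inv - \mbf{Y}\right)\inv = -\mbf{Y}\inv - \mbf{Y}\inv\left(\mbf{X} - \mbf{Y}\inv\right)\inv\mbf{Y}\inv$, uses positive definiteness of the left side, and pre- and post-multiplies by $\mbf{Y}$ to extract the sign of $\left(\mbf{X} - \mbf{Y}\inv\right)\inv$, then repeats a mirrored computation for the converse. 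Your approach is more elementary and more symmetric — one order-reversal step handles both directions at once, and the key fact is a standard result you could also simply cite — whereas the paper's approach stays entirely within the matrix-identity toolbox (SMW) it already uses elsewhere, at the cost of two separate, slightly longer computations. The only care point in your version, as you note, is verifying that every matrix being inverted or conjugated is positive definite, which holds here since $\mbf{X}, \mbf{Y} > 0$.
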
%
\vspace{-6pt}%
\begin{proof}
  First, \eqref{eq:lemma inverting strict LMI: Xinv - Y > 0} implies \eqref{eq:lemma inverting strict LMI: X - Yinv < 0} will be shown.
  Using the Sherman-Morrison-Woodbury identity
  ~\cite{ Barfoot_State_2017},
  \vspace{-3pt}
  \begin{align}
    \left(\mbf{X}\inv - \mbf{Y} \right)\inv
     & =
    -\mbf{Y}\inv - \mbf{Y}\inv\left(\mbf{X} - \mbf{Y}\inv \right)\inv\mbf{Y}\inv.
    \vspace{-3pt}
  \end{align}
  The positive definiteness of \eqref{eq:lemma inverting strict LMI: Xinv - Y > 0} (and its inverse) implies
  \vspace{-3pt}
  \begin{align}
    \label{eq:lemma inverting strict LMI: suff 1}
    -\mbf{Y}\inv - \mbf{Y}\inv\left(\mbf{X} - \mbf{Y}\inv \right)\inv\mbf{Y}\inv
     & > 0.
     \vspace{-3pt}
  \end{align}
  Pre- and post multiplying \eqref{eq:lemma inverting strict LMI: suff 1} by the invertible $\mbf{Y}$ results in
  \vspace{-3pt}
  \begin{align}
    -\mbf{Y} - \left(\mbf{X} - \mbf{Y}\inv \right)\inv
     & > 0,
     \vspace{-3pt}
  \end{align}
  which is rearranged to give
  \vspace{-3pt}
  \begin{align}
    \left(\mbf{X} - \mbf{Y}\inv \right)\inv
     & < - \mbf{Y}
    < 0,
    \vspace{-9pt}
  \end{align}
  which in turn implies
  \vspace{-6pt}
  \begin{align}
    \mbf{X} - \mbf{Y}\inv
     & < 0.
     \vspace{-6pt}
  \end{align}%
  Second, \eqref{eq:lemma inverting strict LMI: X - Yinv < 0} implies \eqref{eq:lemma inverting strict LMI: Xinv - Y > 0} will be shown.
  Using the same Sherman-Morrison-Woodbury identity, the left-hand side of \eqref{eq:lemma inverting strict LMI: X - Yinv < 0} is written as
  \vspace{-3pt}
  \begin{align}
    \left(\mbf{X} - \mbf{Y}\inv \right)\inv
     & =
    \mbf{X}\inv - \mbf{X}\inv\left(\mbf{X}\inv - \mbf{Y} \right)\inv\mbf{X}\inv.
  \end{align}
  The positive definiteness of \eqref{eq:lemma inverting strict LMI: X - Yinv < 0} (and its inverse) implies
  \vspace{-3pt}
  \begin{align}
    \mbf{X}\inv - \mbf{X}\inv\left(\mbf{X}\inv - \mbf{Y} \right)\inv\mbf{X}\inv
     & > 0.
  \end{align}
  Pre- and post multiplying by the invertible matrix $\mbf{X}$ results in
  \vspace{-3pt}
  \begin{align}
    \mbf{X} - \left(\mbf{X}\inv - \mbf{Y} \right)\inv
     & < 0,
  \end{align}
  which is rearranged to give
  \vspace{-3pt}
  \begin{align}
    \left(\mbf{X}\inv - \mbf{Y} \right)\inv
     & > \mbf{X}
    > 0,
  \end{align}
  which in turn implies
  \vspace{-3pt}
  \begin{align}
    \mbf{X}\inv - \mbf{Y}
     & > 0.
  \end{align}
\end{proof}

\section{Existence and Nonuniqueness of the \acs{CSP}}
\label{sec:existence and nonuniqueness of CSP}
\begin{theorem}
  \label{thm:existence and nonuniqueness of CSP}
  Given positive definite matrices $\Pins_{k}, \Pins_{k-1}\in\mbb{S}^{n}$ (\ie, $\Pins_{k}, \Pins_{k-1}>0$) and a full rank matrix $\linsysA\in\rnums^{n\times n}$, there exists $\mbf{Q} > 0$ and $\mbs{\Omega}\geq 0$ such that
  \begin{align}
    \label{eq:existence and nonuniqueness of CSP: cov. constraint}
    \Pins_{k}\inv
     & =
    \left(
    \linsysA \Pins_{k-1} \linsysA^{\trans}
    + \mbf{Q}
    \right)\inv
    +
    \mbs{\Omega}
  \end{align}
  holds, and the solution is not unique.
\end{theorem}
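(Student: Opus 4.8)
The plan is to solve the covariance constraint \eqref{eq:existence and nonuniqueness of CSP: cov. constraint} explicitly for $\mbs{\Omega}$, exhibit a one-parameter family of admissible $\mbf{Q}$, and then conclude nonuniqueness by arguing that the induced map $\mbf{Q}\mapsto\mbs{\Omega}$ is injective. Write $\mbf{M}\triangleq\linsysA\Pins_{k-1}\linsysA^{\trans}$; since $\linsysA$ has full rank and $\Pins_{k-1}>0$, the matrix $\mbf{M}$ is positive definite. For any $\mbf{Q}>0$, the choice
\begin{equation*}
  \mbs{\Omega}(\mbf{Q}) \triangleq \Pins_{k}\inv - \left(\mbf{M}+\mbf{Q}\right)\inv
\end{equation*}
makes \eqref{eq:existence and nonuniqueness of CSP: cov. constraint} hold by construction, so the whole problem reduces to choosing $\mbf{Q}>0$ for which $\mbs{\Omega}(\mbf{Q})\geq 0$. (This also makes transparent why the $\mbs{\Omega}$ term is indispensable, consistent with the Example preceding the theorem: without it one is forced to take $\mbf{Q}=\Pins_{k}-\mbf{M}$, which need not be positive definite.)

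For the existence part, I would invoke \Cref{lemma:invertible strict LMI} with the identifications $\mbf{X}=(\mbf{M}+\mbf{Q})\inv$ and $\mbf{Y}=\Pins_{k}$, both positive definite. The lemma gives $\mbf{X}\inv-\mbf{Y}>0$ if and only if $\mbf{X}-\mbf{Y}\inv<0$, that is, $\mbf{M}+\mbf{Q}>\Pins_{k}$ if and only if $\mbs{\Omega}(\mbf{Q})>0$. Hence it suffices to produce a single $\mbf{Q}>0$ with $\mbf{M}+\mbf{Q}>\Pins_{k}$. Taking $\mbf{Q}=\alpha\eye$ with $\alpha\geq\lambda_{\max}(\Pins_{k})$ works, since then $\mbf{M}+\mbf{Q}>\alpha\eye\geq\Pins_{k}$ using $\mbf{M}>0$; this already yields $\mbs{\Omega}(\mbf{Q})>0\geq 0$, proving existence.

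For nonuniqueness, note that every $\mbf{Q}_{\alpha}=\alpha\eye$ with $\alpha\geq\lambda_{\max}(\Pins_{k})$ gives an admissible pair $\bigl(\mbf{Q}_{\alpha},\,\mbs{\Omega}(\mbf{Q}_{\alpha})\bigr)$, and the map $\mbf{Q}\mapsto(\mbf{M}+\mbf{Q})\inv$ is injective on the set of positive definite matrices, so distinct values of $\alpha$ produce distinct $\mbs{\Omega}$; therefore \eqref{eq:existence and nonuniqueness of CSP: cov. constraint} admits infinitely many solutions. The only mildly delicate step is the equivalence $\mbf{M}+\mbf{Q}>\Pins_{k}\Leftrightarrow\mbs{\Omega}(\mbf{Q})>0$, i.e.\ the operator-antitonicity of matrix inversion, but this is exactly the content of \Cref{lemma:invertible strict LMI}, so with that lemma in hand no genuine obstacle remains.
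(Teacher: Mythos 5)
Your proposal is correct, and it takes a somewhat different route from the paper's proof. The paper establishes existence by the single concrete choice $\mbf{Q}=\Pins_{k}$, using the Sherman--Morrison--Woodbury identity to rewrite $\Pins_{k}\inv-\left(\linsysA\Pins_{k-1}\linsysA^{\trans}+\Pins_{k}\right)\inv$ as a positive definite congruence, and establishes nonuniqueness by perturbing an arbitrary admissible pair: for any $\delta\mbf{Q}>0$ it shows, again via Sherman--Morrison--Woodbury, that $\mbf{Q}^{(2)}=\mbf{Q}^{(1)}+\delta\mbf{Q}$ yields $\mbs{\Omega}^{(2)}>\mbs{\Omega}^{(1)}\geq 0$, hence another valid solution. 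You instead reduce the whole problem, via \Cref{lemma:invertible strict LMI}, to the condition $\mbf{M}+\mbf{Q}>\Pins_{k}$ with $\mbf{M}=\linsysA\Pins_{k-1}\linsysA^{\trans}>0$, and then exhibit the family $\mbf{Q}_{\alpha}=\alpha\eye$, $\alpha\geq\lambda_{\max}(\Pins_{k})>0$, which delivers existence and nonuniqueness in one stroke; your identifications $\mbf{X}=(\mbf{M}+\mbf{Q})\inv$ and $\mbf{Y}=\Pins_{k}$ satisfy the lemma's hypotheses, and the bound $\alpha\eye\geq\Pins_{k}$ together with $\mbf{M}>0$ does give the strict inequality, so every step checks out (the injectivity remark is not even needed, since distinct $\alpha$ already give distinct $\mbf{Q}_{\alpha}$ and hence distinct pairs). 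What each approach buys: yours is leaner, reusing the paper's own LMI-inversion lemma (antitonicity of matrix inversion) and avoiding Woodbury manipulations entirely; the paper's perturbation argument is slightly stronger in spirit, since it shows that \emph{every} admissible solution can be inflated into infinitely many others, not just the particular scaled-identity family you construct.
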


\begin{proof}
  The existence of a solution is proved by showing an example that will always produce a valid solution.

  Set
  \begin{align}
    \mbf{Q}
     & \coloneqq \Pins_{k}
    > 0.
  \end{align}
  Then, the corresponding $\mbs{\Omega}$ that satisfies \eqref{eq:existence and nonuniqueness of CSP: cov. constraint} is given by
  \begin{align}
    \mbs{\Omega}
     & =
    \Pins_{k}\inv
    -
    \left(
    \linsysA \Pins_{k-1} \linsysA^{\trans}
    + \mbf{Q}
    \right)\inv
    \\
     & =
    \Pins_{k}\inv
    -
    \left(
    \linsysA \Pins_{k-1} \linsysA^{\trans}
    + \Pins_{k}
    \right)\inv
    \\
    \label{eq:existence of a solution: SMW}
     & =
    \Pins_{k}
    +
    \underbrace{
      \Pins_{k}
    }_{>0}
    \underbrace{
      \left(
      \linsysA \Pins_{k-1} \linsysA^{\trans}
      \right)\inv
    }_{>0}
    \underbrace{
      \Pins_{k}
    }_{>0}
    > 0,
  \end{align}
  where the Sherman-Morrison-Woodbury identity~\cite{Barfoot_State_2017} is used in \eqref{eq:existence of a solution: SMW}.
  To prove sufficiency, let $\mbs{\Omega}^{(1)}\geq 0$ and $\mbf{Q}^{(1)}$ satisfy \eqref{eq:existence and nonuniqueness of CSP: cov. constraint}.
  Furthermore, let $\mbfdel{Q}>0$ be any positive definite matrix.
  Then, set the new positive definite matrix to be
  \begin{align}
    \label{eq:existence and nonuniqueness of csp: Q2 def}
    \mbf{Q}^{(2)}
     & \coloneqq \mbf{Q}^{(1)} + \mbfdel{Q}
    > 0.
  \end{align}
  Letting
  \begin{align}
    \label{eq:existence and nonuniqueness of CSP: define D}
    \mbf{D}
     & \coloneqq
    \left(
    \linsysA \Pins_{k-1} \linsysA^{\trans}
    \right)\inv
    > 0
  \end{align}
  and inserting \eqref{eq:existence and nonuniqueness of csp: Q2 def} and \eqref{eq:existence and nonuniqueness of CSP: define D} to \eqref{eq:existence and nonuniqueness of CSP: cov. constraint} results in
  \begin{align}
    \label{eq:existence and nonuniqueness of CSP: cov. constraint with mbfdelQ}
    \mbs{\Omega}^{(2)}
     & =
    \Pins_{k}\inv
    -
    \left(
    \linsysA \Pins_{k-1} \linsysA^{\trans}
    + \mbf{Q}^{(1)} + \mbfdel{Q}
    \right)\inv
    \\
     & =
    \Pins_{k}\inv
    -
    \left(
    \mbf{D} + \mbfdel{Q}
    \right)\inv
    \\
    \label{eq:nonuniqueness of CSP: SMW}
     & =
    \Pins_{k}
    -
    \left(
    \linsysA \Pins_{k-1} \linsysA^{\trans}
    + \mbf{Q}^{(1)}
    \right)\inv
    \nonumber                                \\
     & \mathrel{\phantom{=}} \negmedspace {}
    +
    \underbrace{\mbf{D}\inv}_{>0}
    \underbrace{
      \left(\mbfdel{Q}\inv + \mbf{D}\inv \right)
    }_{>0}
    \underbrace{\mbf{D}\inv}_{>0}
    \\
     & >
    \Pins_{k}\inv
    -
    \left(
    \linsysA \Pins_{k-1} \linsysA^{\trans}
    + \mbf{Q}^{(1)}
    \right)\inv
    \\
     & = \mbs{\Omega}^{(1)},
  \end{align}
  where the Sherman-Morrison-Woodbury identity \cite[Sec.~2.2.7]{Barfoot_State_2017} is used in \eqref{eq:nonuniqueness of CSP: SMW}.
  Therefore, for every $\mbfdel{Q}>0$, there exists $\mbf{Q}^{(2)} > 0$ and $\mbs{\Omega}^{(2)}\geq 0$ that satisfy \eqref{eq:existence and nonuniqueness of CSP: cov. constraint} where $\mbf{Q}^{(1)}\neq \mbf{Q}^{(1)}$ and $\mbs{\Omega}^{(2)}\neq \mbs{\Omega}^{(1)}$.
  Therefore, there are infinitely many possible solutions.
\end{proof}

\section*{Acknowledgment}

The authors would like to thank Ryan~Wicks from Voyis for the experimental data valuable input, and Martin~J{\o}rgensen and Mathew~Grove of Sonardyne for access to simulation data and valuable advice.

\ifCLASSOPTIONcaptionsoff
  \newpage
\fi




\printbibliography

\end{document}